\documentclass{article}

\PassOptionsToPackage{numbers, compress}{natbib}

 \usepackage[preprint]{neurips_2026}

\usepackage[utf8]{inputenc} 
\usepackage[T1]{fontenc}    
\usepackage{hyperref}       
\usepackage{url}            
\usepackage{booktabs}       
\usepackage{amsfonts}       
\usepackage{amsmath}        
\usepackage{nicefrac}       
\usepackage{microtype}      
\usepackage{xcolor}         
\usepackage[breakable]{tcolorbox}  
\usepackage{amssymb, amsthm}
 \usepackage{placeins}
 \usepackage{mathtools}  
\newtheorem{theorem}{Theorem}
\newtheorem{assumption}{Assumption}

\DeclareMathOperator{\Var}{Var}
\DeclareMathOperator{\Cov}{Cov}
\DeclareMathOperator{\Corr}{Corr}
\DeclareMathOperator*{\E}{\mathbb{E}}

\title{SD-GRPO: Verifiable Segment Decomposition for Long-Form Vision-Language Generation}

\author{%
  Hyunwoong Kim\thanks{Corresponding author} \quad
  Seongeun Lee \quad
  Hannah Yun \quad
  Junhyun Park \quad
  Jonggwon Park \\
  DEEPNOID Inc., Seoul, South Korea \\
  \texttt{\{hwkim, selee, hnyun, jh\_park, jgpark\}@deepnoid.com} \\  
}

\begin{document}
\maketitle
\begin{abstract}

Group Relative Policy Optimization (GRPO) and its variants, originally developed for Large Language Models (LLMs), have recently been applied to Multimodal LLMs and produced strong results. 
However, their coarse-grained \emph{holistic} credit assignment from a single scalar advantage underfits vision-language (VL) tasks, where outputs are often long-form responses grounded in semantically rich images. 
To address this limitation, we exploit a structured signal that single-scalar formulations discard: the natural segmentation of long-form VL outputs.
Concretely, we propose Segment-Decomposed GRPO (SD-GRPO), which z-normalizes \emph{verifiable per-segment} rewards across the rollout group, yielding a vector of per-segment advantages in place of a single scalar.
We evaluate SD-GRPO across three settings spanning controlled and real-world long-form VL generation, organized by increasing semantic entanglement across segments.
On a controlled multi-panel dense-captioning task constructed from DOCCI, where segments are semantically independent, SD-GRPO consistently outperforms the GRPO baseline, with larger gains at higher segment counts.
Extending to a controlled multi-chart long-form VQA task constructed from MultiChartQA, we show both theoretically and empirically that rollout-level rewards suffer from cross-segment credit misattribution that scales with output length. SD-GRPO effectively bypasses this bottleneck, making it a structurally superior objective for reasoning models requiring long-form generation.
On a real-world scientific figure captioning task on the MMSci dataset, where subfigure captions share context across the figure, blending holistic and per-segment rewards further improves on both, suggesting per-segment normalization alone is insufficient when segments are semantically entangled.
Finally, by integrating SD-GRPO into Dr.~GRPO, we confirm that it can be applied to any GRPO framework with minimal implementation overhead to enhance long-form VL generation.

\end{abstract}

\begin{figure*}[t]
  \centering
  \includegraphics[width=\textwidth]{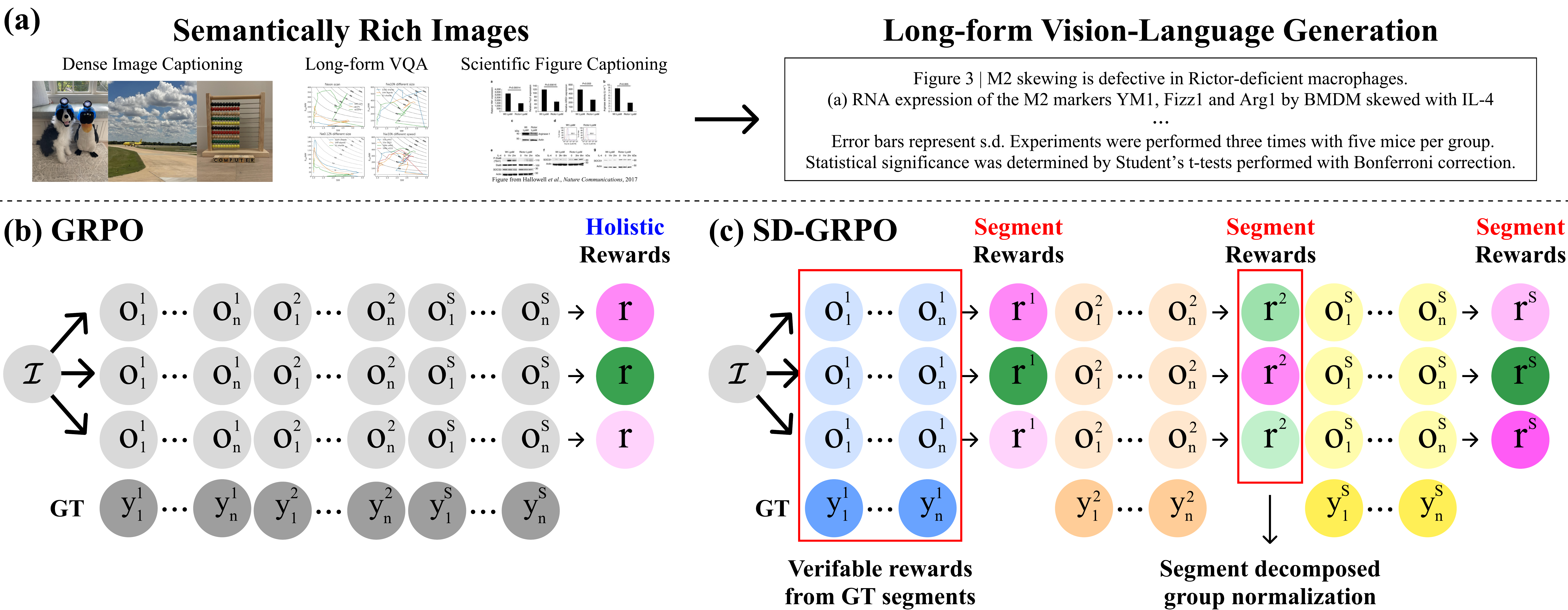}
    \caption{Illustration of SD-GRPO. (a) Vision-language generation from semantically rich images produces long-form generations. (b) Vanilla GRPO scores the entire rollout with one scalar reward. (c) SD-GRPO exploits the segmental structure of long-form generations, computing verifiable per-segment rewards for better credit assignment.}
  \label{fig:inrto}
\end{figure*}

\section{Introduction}
\label{sec:introduction}
Group Relative Policy Optimization (GRPO) is a leading framework within the Reinforcement Learning from Verifiable Rewards (RLVR) paradigm, frequently serving as the backbone for state-of-the-art large language models~\citep{guo2025deepseek,qwen25math}.
GRPO assigns a single scalar reward per output and derives one group-normalized advantage, a \emph{holistic} advantage, that is shared across all of its tokens. This design removes the critic model required by PPO~\citep{ppo} and yields stable, efficient training, and is a natural fit for problems whose trajectories are monolithic, such as math problems with a single verifiable answer. However, this efficiency comes at the cost of fine-grained credit assignment. Recent work in reasoning LLMs addresses this by partitioning chains of thought into segments and \emph{estimating} per-segment advantages; via Monte Carlo rollouts at segment boundaries~\citep{guosegment}, learned segment-level value functions~\citep{gong2026segmental}, retroactively matched anchor states~\citep{feng2025group}, reward shaping~\citep{fei2025self}, or outcome-conditioned baselines that stratify the rollout group by intermediate outcome~\citep{pavlenko2026blockwise}. Estimation is necessary in this setting because ground truth exists only at the trajectory end. Reasoning steps are also causally dependent, so per-step rewards cannot be cleanly computed.

Beyond LLMs, the GRPO framework has been applied to multimodal LLMs (MLLMs)~\citep{Huang2025VisionR1IR,chen2025r1v,Peng2025LMMR1E3}, achieving state-of-the-art results on their respective benchmarks. Most of these efforts focus on visual question answering (VQA) tasks, where a single verifiable answer maps cleanly onto GRPO's holistic reward. Yet long-form vision-language (VL) generation, which produces extended responses describing multiple aspects of an image, is also widespread, encompassing tasks such as dense image description~\citep{OnoeDocci2024,urbanek2024picture}, scientific figure captioning~\citep{li2024mmsci}, medical image report generation~\citep{johnson2019mimic}, and remote sensing image description~\citep{li2024vrsbench}. For these tasks, holistic credit assignment is poorly suited: a single advantage cannot distinguish erroneous tokens from those describing a correct observation, forcing every token in the rollout to receive identical credit regardless of its individual contribution. Fortunately, in these tasks the ground truth itself naturally decomposes into segments, each independently verifiable. Aligning the rollout's output with this segmentation enables \emph{per-segment} rewards to be \emph{computed} against ground truth rather than \emph{estimated}.

Building on this insight, we introduce Segment-Decomposed GRPO (\textbf{SD-GRPO}), an extension of GRPO with segment-level credit assignment for long-form VL generation. Each rollout output is partitioned into segments aligned with their ground truths; per-segment rewards are computed and z-normalized across the rollout group to yield a per-segment advantage vector, lifting GRPO's group-relative advantage from the trajectory to the segment level (Figure~\ref{fig:inrto}). Unlike prior segment-level methods, segment rewards are computed, therefore SD-GRPO requires no Monte Carlo rollouts, learned critics, or anchor-state matching. SD-GRPO becomes even more effective for reasoning models, where longer rollouts amplify cross-segment gradient misattribution; we show this both theoretically (Section~\ref{sec:residual}) and empirically (Section~\ref{sec:vqa-results}). SD-GRPO applies to any task whose output decomposes into verifiable segments and integrates into any GRPO variant with minimal cost.

We evaluate SD-GRPO extensively on long-form VL tasks, across three settings spanning controlled and real-world tasks, organized by increasing semantic entanglement across segments.
(1) On a controlled multi-panel dense-captioning task built from DOCCI~\citep{OnoeDocci2024}, where segments are semantically independent, SD-GRPO outperforms GRPO by 1.2-2.8 BERTScore-F1~\citep{zhang2019bertscore} points across two model sizes, with larger gains at higher segment counts; the same advantage holds when applied to a GRPO variant, Dr.~GRPO~\citep{liu2025understanding} (1.7-4.3 pp).
(2) On a controlled multi-chart long-form VQA task from MultiChartQA~\citep{zhu2025multichartqa}, SD-GRPO yields 1.0-2.3 pp accuracy improvements over GRPO in reasoning models, holding across three model scales and Dr.~GRPO (1.0-3.2 pp).
(3) On real-world scientific figure captioning from MMSci~\citep{li2024mmsci}, where subfigure captions share context across the figure, \emph{blending} holistic and per-segment rewards further improves on both, suggesting per-segment normalization alone is insufficient when segments are semantically entangled.
Our contributions are as follows:
\begin{enumerate}
    \item We identify long-form VL generation as a setting where per-segment rewards can be \emph{computed} directly against ground truth, and propose Segment-Decomposed GRPO (\textbf{SD-GRPO}), where \emph{verifiable per-segment} rewards are z-normalized across the rollout group to yield a vector of per-segment advantages.

    \item We evaluate SD-GRPO extensively on three long-form VL tasks spanning controlled and real-world settings. 
      SD-GRPO yields consistent improvements over GRPO, with the trend holding across multiple model scales and a GRPO variant (Dr.~GRPO).

    \item We show, both theoretically and empirically, that the benefit of per-segment credit assignment grows for reasoning models. 
    Furthermore, when segments share semantic context, we find that \emph{blending} holistic and per-segment rewards 
    outperforms either alone, suggesting that the optimal granularity of credit assignment depends on cross-segment structure.

\end{enumerate}

\section{Related Works} 
\label{sec:related_works}

\subsection{Segment-level credit assignment in RL}
\label{sec:related_segment}

A line of recent work has proposed segment-level credit assignment to improve trajectory-level reinforcement learning (RL) for reasoning LLMs. 
SPO \citep{guosegment} partitions a chain-of-thought (CoT) into contiguous segments using cutpoint-based boundaries, then estimates per-segment advantages through Monte Carlo rollouts at segment boundaries, either via chain-based estimation for short CoT or tree-based sampling for long CoT. GiGPO \citep{feng2025group} introduces a two-level advantage structure for multi-turn LLM agents. SPRO \citep{fei2025self} applies reward shaping to redistribute holistic rewards across sequential reasoning steps, and SAE \citep{gong2026segmental} learns segment-level advantage estimators directly. All of these methods operate in text-only domains: mathematical reasoning, code generation, or text-based agent environments, where sequential causal steps necessitate estimated advantages.

\subsection{RL fine-tuning of MLLMs}
\label{sec:related_mllm}
Reinforcement learning has been extended from LLMs to MLLMs, with several recent works applying GRPO and its variants to vision-language reasoning. Vision-R1~\citep{Huang2025VisionR1IR} applies GRPO to multimodal reasoning with verifiable rewards from multiple-choice and short-answer benchmarks. R1-V~\citep{chen2025r1v} demonstrates that GRPO transfers to MLLMs on counting, OCR, and visual math tasks where answers admit exact-match verification. LMM-R1~\citep{Peng2025LMMR1E3} extends this paradigm with a two-stage curriculum from text-only to multimodal reasoning. Across these methods, the reward signal is consistently a single trajectory-level scalar derived from a single verifiable answer; multiple-choice match, short-form QA accuracy, or numerical equality, and the holistic credit-assignment regime that GRPO inherits from text-only RLVR transfers cleanly.

\subsection{Long-form VL generation}
\label{sec:related_longform}
Long-form vision-language generation spans several application domains. Dense image description on DOCCI~\citep{OnoeDocci2024} and ImageInWords~\citep{urbanek2024picture} produces multi-sentence captions that decompose along visual sub-entities or attributes. Remote-sensing description on VRSBench~\citep{li2024vrsbench} and scientific figure captioning on SciCap~\citep{hsu2021scicap} and MMSci~\citep{li2024mmsci} likewise produce multi-segment outputs in which the reference itself is a structured concatenation of verifiable units. Despite the natural segmentation in these tasks, supervised fine-tuning on long-form references has remained the dominant training paradigm. RL approaches to long-form captioning have historically used trajectory-level surface metrics (CIDEr~\citep{vedantam2015cider}, BLEU~\citep{papineni2002bleu}, BERTScore) within REINFORCE-style methods such as Self-Critical Sequence Training (SCST)~\citep{rennie2017self}, treating each rollout's caption as a single unit and optimizing a single holistic reward~\citep{liu2017improved,li2018hybrid}. These works do not exploit the segment-level ground truth that the reference makes available, and apply the same scalar advantage to every token in the rollout.
\section{Method}
\label{sec:method}

\subsection{Preliminaries: Group Relative Policy Optimization}
\label{sec:method_grpo}

Let $\pi_\theta$ denote the MLLM policy with parameters $\theta$, and let $x$ be a VL input consisting of an image and a text prompt. For each $x$, GRPO~\citep{grpo} samples a group of $G$ rollouts $\{o_g\}_{g=1}^G$ from the old policy $\pi_{\theta_\text{old}}$, where each rollout $o_g = (o_{g,1}, \dots, o_{g,|o_g|})$ is a sequence of $|o_g|$ output tokens. A scalar reward $r_g = R(o_g)$ is assigned to each rollout, and a group-normalized \emph{holistic advantage} is computed:
\begin{equation}
\hat{A}_g = \frac{r_g - \mu_r}{\sigma_r + \delta}, \quad \text{where} \quad \mu_r = \tfrac{1}{G}\textstyle\sum_g r_g, \;\; \sigma_r^2 = \tfrac{1}{G}\textstyle\sum_g (r_g - \mu_r)^2,
\end{equation}
with a small $\delta > 0$ for numerical stability. The same advantage $\hat{A}_g$ is assigned to every token in rollout $g$. The policy is updated by maximizing a clipped surrogate objective with a per-token KL penalty against a fixed reference policy $\pi_\text{ref}$ (typically the supervised fine-tuned initialization):
\begin{equation}
\begin{aligned}
\mathcal{J}_\text{GRPO}(\theta) = \mathbb{E}_{\,x,\; o \sim \pi_{\theta_\text{old}}(O \mid x)} \Bigg[ \frac{1}{G} \sum_{g=1}^G \frac{1}{|o_g|} \sum_{t=1}^{|o_g|} \Big( &\min\!\left( \rho_{g,t}\, \hat{A}_g,\; \mathrm{clip}(\rho_{g,t}, 1{-}\epsilon, 1{+}\epsilon)\, \hat{A}_g \right) \\
&\;-\; \beta \cdot D_\text{KL}\!\left[ \pi_\theta \,\|\, \pi_\text{ref} \right] \Big) \Bigg],
\end{aligned}
\end{equation}
where $\rho_{g,t} = \pi_\theta(o_{g,t} \mid x, o_{g,<t}) / \pi_{\theta_\text{old}}(o_{g,t} \mid x, o_{g,<t})$ is the importance ratio against the rollout policy, $\epsilon$ is the clip range, $\beta$ is the KL coefficient, and the per-token KL estimator follows the unbiased low-variance form. 
The Dr.~GRPO variant~\citep{liu2025understanding} drops the $\sigma_r$ normalization (i.e., $\hat{A}_g = r_g - \mu_r$) to mitigate length and difficulty biases inherited from PPO.

\subsection{Long-form VL generation}
\label{sec:method_setting}

We formalize long-form VL tasks in which each output rollout admits a natural partition into $S$ contiguous segments,
\begin{equation}
o_g = \big(o_{g,1}, o_{g,2}, \dots, o_{g,S}\big),
\end{equation}
where each $o_{g,s}$ is a contiguous sub-sequence of tokens, and the reference is correspondingly structured into $S$ ground-truth segments $y^\star_1, \dots, y^\star_S$, each independently verifiable.

A per-segment reward function $R_\text{seg}$ scores each segment against its corresponding reference, yielding a vector of segment rewards for each rollout:
\begin{equation}
\mathbf{r}_g = \big(r_{g,1}, \dots, r_{g,S}\big), \quad r_{g,s} = R_\text{seg}\big(o_{g,s},\, y^\star_s\big).
\end{equation}
$R_\text{seg}$ is any reference-based metric. By default, the holistic reward used by standard GRPO is obtained by applying the same metric to the entire rollout against the concatenated reference,
\begin{equation}
r_g = R_\text{seg}\big(o_g,\, y^\star_\text{concat}\big), \qquad y^\star_\text{concat} = \mathrm{concat}\big(y^\star_1, \dots, y^\star_S\big),
\end{equation}
treating the rollout as a single unit without reference to its segmentation. For completely independent segments (the multi-panel image captioning task in Section~\ref{sec:docci-exp}) and VQA (Section~\ref{sec:vqa-exp}), we instead set $r_g = \tfrac{1}{S} \sum_s r_{g,s}$, a mean-of-segments holistic reward (\textit{mean-GRPO}).

\subsection{Segment-Decomposed GRPO (SD-GRPO)}
\label{sec:method_sdgrpo}

SD-GRPO replaces GRPO's scalar advantage with a vector of per-segment advantages, each computed by group-normalization \emph{within its own segment index}:
\begin{equation}
\hat{A}_{g,s} = \frac{r_{g,s} - \mu_{r,s}}{\sigma_{r,s} + \delta}, \quad \mu_{r,s} = \tfrac{1}{G}\textstyle\sum_g r_{g,s}, \;\; \sigma_{r,s}^2 = \tfrac{1}{G}\textstyle\sum_g \big(r_{g,s} - \mu_{r,s}\big)^2.
\label{eq:sdgrpo_advantage}
\end{equation}
Each token inherits the advantage of the segment it belongs to: for a token at position $t$ in rollout $g$, let $s(t)$ denote its segment index; the SD-GRPO objective replaces $\hat{A}_g$ with $\hat{A}_{g,s(t)}$ in the per-token surrogate,
\begin{equation}
\begin{aligned}
\mathcal{J}_\text{SD-GRPO}(\theta) = \mathbb{E}_{\,x,\; o \sim \pi_{\theta_\text{old}}(O \mid x)} \Bigg[ \frac{1}{G} \sum_{g=1}^G \frac{1}{|o_g|} \sum_{t=1}^{|o_g|} \Big( &\min\!\left( \rho_{g,t}\, \hat{A}_{g,s(t)},\; \mathrm{clip}(\rho_{g,t}, 1{-}\epsilon, 1{+}\epsilon)\, \hat{A}_{g,s(t)} \right) \\
&\;-\; \beta \cdot D_\text{KL}\!\left[ \pi_\theta \,\|\, \pi_\text{ref} \right] \Big) \Bigg].
\end{aligned}
\end{equation}
This is a direct extension of GRPO's group-relative advantage from the trajectory level to the segment level. SD-GRPO requires no additional rollouts, learned critics, or anchor-state matching, and reduces to standard GRPO when $S = 1$.

\subsection{Cross-segment residual amplification in long-form generation}
\label{sec:residual}

Holistic GRPO updates segment $s$ with the rollout-level advantage
$\hat{A}_g$, while SD-GRPO updates it with the segment-level advantage
$\hat{A}_{g,s}$. Decomposing the holistic advantage as
\begin{equation}
  \hat{A}_g \;=\; \hat{A}_{g,s} \;+\; \eta_{g,s},
  \qquad
  \eta_{g,s} \,\coloneqq\, \hat{A}_g - \hat{A}_{g,s},
\end{equation}
allows us to isolate $\eta_{g,s}$ as the \textit{credit misattribution} induced in GRPO.
This identity splits the holistic per-segment gradient into the SD-GRPO update
$\hat{A}_{g,s}\,\boldsymbol{\psi}_{g,s}$ plus the
\emph{cross-segment residual}
\begin{equation}
  \boldsymbol{\xi}_{g,s} \,\coloneqq\, \eta_{g,s}\,\boldsymbol{\psi}_{g,s},
  \qquad
  \boldsymbol{\psi}_{g,s} \,\coloneqq\, \sum_{t \in o_{g,s}}
  \nabla\log\pi_\theta(o_{g,t}\mid o_{g,<t},\,x),
\end{equation}
where $\boldsymbol{\psi}_{g,s}$ is the segment's accumulated score.
SD-GRPO sets $\boldsymbol{\xi}_{g,s} = \mathbf{0}$ by construction;
the question is how large this residual is under holistic GRPO.

Both advantages are z-normalized within their respective groups, so
$\E[\hat{A}_g] = \E[\hat{A}_{g,s}] = 0$ and
$\Var(\hat{A}_g) = \Var(\hat{A}_{g,s}) = 1$. Let
$\kappa_s \coloneqq \Corr(\hat{A}_g, \hat{A}_{g,s})$ measure how well the holistic and segment-level advantages agree.

\begin{assumption}[Non-negative intra-segment score correlation]
\label{ass:score_corr}
For every segment $s$ and every $t, t' \in o_{g,s}$,
$\E\bigl[\nabla\log\pi_{g,t} \cdot \nabla\log\pi_{g,t'}\bigr] \geq 0$.
\end{assumption}
Tokens within a segment share policy parameters and conditioning
context, so we expect their score functions to be aligned in
expectation.

\begin{theorem}[Cross-segment residual under holistic GRPO]
\label{thm:residual}
Under Assumption~\ref{ass:score_corr},
\begin{equation}
\E\bigl[\|\boldsymbol{\xi}_{g,s}\|^2\bigr]
\;=\;
\E\bigl[\eta_{g,s}^2\,\|\boldsymbol{\psi}_{g,s}\|^2\bigr],
\end{equation}
where the two factors satisfy
\begin{equation}
\E[\eta_{g,s}^2] \,=\, 2(1 - \kappa_s),
\qquad
\E\bigl[\|\boldsymbol{\psi}_{g,s}\|^2\bigr]
\,\geq\, |o_{g,s}|\cdot\E\bigl[\|\nabla\log\pi_{g,t}\|^2\bigr].
\end{equation}
Under SD-GRPO, $\eta_{g,s} \equiv 0$ and so
$\boldsymbol{\xi}_{g,s} = \mathbf{0}$.
\end{theorem}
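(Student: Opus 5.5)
The plan is to treat the three claims separately, since each reduces to an elementary identity once the definitions are unpacked.

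\textbf{The first identity.} We have $\boldsymbol{\xi}_{g,s} = \eta_{g,s}\boldsymbol{\psi}_{g,s}$ with $\eta_{g,s}$ a scalar. Hence $\|\boldsymbol{\xi}_{g,s}\|^2 = \eta_{g,s}^2\|\boldsymbol{\psi}_{g,s}\|^2$ pointwise, and taking expectations gives the displayed equality. I will state it as an expectation of the product, without any independence claim between $\eta_{g,s}$ and $\boldsymbol{\psi}_{g,s}$, which is why the theorem lists the two factors separately.

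\textbf{The residual factor.} Expand $\E[\eta_{g,s}^2] = \E[\hat A_g^2] - 2\E[\hat A_g \hat A_{g,s}] + \E[\hat A_{g,s}^2]$. Using the normalization stated just before Assumption~\ref{ass:score_corr}, namely zero mean and unit variance for both advantages, we get $\E[\hat A_g^2] = \E[\hat A_{g,s}^2] = 1$. The cross term then equals $\Cov(\hat A_g,\hat A_{g,s}) = \kappa_s$, giving $2(1-\kappa_s)$.

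The main subtlety is to make explicit that these moments are taken as the idealized ones that z-normalization is meant to induce. Two things are being neglected:
\begin{itemize}
\item the stabilizer $\delta$, which makes the empirical second moment equal $\sigma^2/(\sigma+\delta)^2 \le 1$ rather than exactly $1$;
\item the empirical-versus-population distinction for finite $G$.
\end{itemize}
I would note that treating $\E$ as an average over $g$ within the group makes the identities exact in the empirical sense when $\delta=0$. Under that convention, the empirical correlation of two standardized vectors is precisely $\frac{1}{G}\sum_g \hat A_g\hat A_{g,s}$, so $\frac1G\sum_g(\hat A_g-\hat A_{g,s})^2 = 2(1-\kappa_s)$ exactly.

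\textbf{The score factor.} Write $\|\boldsymbol{\psi}_{g,s}\|^2 = \sum_{t}\|\nabla\log\pi_{g,t}\|^2 + \sum_{t\ne t'}\nabla\log\pi_{g,t}\cdot\nabla\log\pi_{g,t'}$, with both sums over tokens in segment $s$. Take expectations. Assumption~\ref{ass:score_corr} makes every cross term nonnegative, so they can be dropped. This leaves $\sum_{t\in o_{g,s}}\E\|\nabla\log\pi_{g,t}\|^2$.

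To write this as $|o_{g,s}|\cdot\E[\|\nabla\log\pi_{g,t}\|^2]$, I interpret the right-hand side as the per-token average second moment over the segment (or assume a common per-token moment). The sum is then exactly $|o_{g,s}|$ times that average. If the segment length is random, I would condition on the length first and then apply the bound.

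\textbf{The SD-GRPO clause.} Under SD-GRPO the advantage applied to segment $s$ is $\hat A_{g,s}$ itself. Replacing $\hat A_g$ by $\hat A_{g,s}$ in the definition gives $\eta_{g,s}=0$, hence $\boldsymbol{\xi}_{g,s}=\mathbf 0$. This is immediate.

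I expect no real obstacle. The only care needed is in stating the idealizations ($\delta\to0$, and the length/averaging convention) that make the equalities exact.
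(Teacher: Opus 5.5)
Your proposal is correct and follows the paper's proof essentially step for step: the pointwise identity $\|\boldsymbol{\xi}_{g,s}\|^2=\eta_{g,s}^2\|\boldsymbol{\psi}_{g,s}\|^2$, the variance expansion that yields $2(1-\kappa_s)$ from the assumed zero-mean, unit-variance normalization, and dropping the nonnegative cross-token terms while reading the right-hand side as a per-token average over the segment. Your explicit caveats (the stabilizer $\delta$, finite-$G$ empirical versus population moments, and conditioning on a random segment length) are idealizations the paper leaves implicit, so they sharpen the argument rather than change it; one further caveat worth adding is that groups with $\sigma_r=0$ or $\sigma_{r,s}=0$ must be excluded when you set $\delta=0$.
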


\begin{proof}
The identity follows from $\boldsymbol{\xi}_{g,s} = \eta_{g,s}\,\boldsymbol{\psi}_{g,s}$.
For $\eta_{g,s}$,
\[
\E[\eta_{g,s}^2]
= \Var(\hat{A}_g) + \Var(\hat{A}_{g,s})
- 2\,\Cov(\hat{A}_g, \hat{A}_{g,s})
= 2(1 - \kappa_s).
\]
For $\boldsymbol{\psi}_{g,s}$, expand the squared norm and apply
Assumption~\ref{ass:score_corr} to drop the cross-token terms:
\[
\E\bigl[\|\boldsymbol{\psi}_{g,s}\|^2\bigr]
= \sum_{t} \E\bigl[\|\nabla\log\pi_{g,t}\|^2\bigr]
+ \sum_{t \neq t'} \E\bigl[\nabla\log\pi_{g,t}\cdot\nabla\log\pi_{g,t'}\bigr]
\geq |o_{g,s}|\cdot\E\bigl[\|\nabla\log\pi_{g,t}\|^2\bigr],
\]
writing $\E[\|\nabla\log\pi_{g,t}\|^2]$ for the per-token average over
$t \in o_{g,s}$.
\end{proof}

The two factors clarify when the residual is large.
$\E[\eta_{g,s}^2] = 2(1-\kappa_s)$ measures how much holistic and
segment-level advantages disagree. $\E[\|\boldsymbol{\psi}_{g,s}\|^2]$ grows
at least linearly in $|o_{g,s}|$, so longer segments accumulate more
residual mass per update. This makes reasoning models a structurally favorable case for SD-GRPO.
A reasoning model interleaves CoT tokens with each
segment's answer tokens, inflating $|o_{g,s}|$ and therefore
$\E[\|\boldsymbol{\psi}_{g,s}\|^2]$; the residual misattribution by
holistic GRPO grows correspondingly, while SD-GRPO removes it from the
gradient by construction. We confirm empirically in
Section~\ref{sec:vqa-results} that the effect of SD-GRPO is emphasized for reasoning models.

\subsection{Blended advantage for entangled segments}
\label{sec:method_blend}

SD-GRPO eliminates the credit misattribution $\eta_{g,s}$ as residual, which is appropriate when
each segment's reward depends only on its own tokens. However, when segments are
semantically entangled (e.g., subfigure captions share context across the figure), $\eta_{g,s}$ also carries informative cross-segment signal that SD-GRPO discards.

To preserve a controlled fraction of this signal, we consider a
\emph{blended} advantage that combines the per-segment and holistic
signals,
\begin{equation}
\label{eq:blend}
\hat{A}^\text{blend}_{g,s} \;=\; \alpha \cdot \hat{A}_{g,s} \;+\; (1 - \alpha) \cdot \hat{A}_g,
\end{equation}
where $\hat{A}_g$ is the group-normalized holistic advantage of the
concatenated-reference reward (Section~\ref{sec:method_setting}). The
coefficient $\alpha \in [0, 1]$ trades off targeted per-segment credit
against global coherence: $\alpha = 1$ recovers SD-GRPO, $\alpha = 0$
recovers holistic GRPO. We use $\alpha = 0.5$ in our scientific figure
captioning experiments (Section~\ref{sec:mmsci-exp}).

\section{Experiments}
\label{sec:experiments}

\subsection{Multi-panel dense image captioning}
\label{sec:docci-exp}

\textbf{Task and setting.}
A composite of $S$ side-by-side panels is fed to the model, which emits $S$ numbered descriptions in a single rollout under controlled multi-turn decoding (Appendix~\ref{app:docci-prompts}).
Panels share visual context but receive per-panel rewards, so credit must propagate across panels within a sampling group.
We compare two advantage variants: Holistic GRPO (scalar advantage from the mean-of-panel reward) and SD-GRPO (per-panel advantages from per-panel rewards).
Each variant is run under both $\sigma$-normalized GRPO and Dr.~GRPO with all hyperparameters matched across conditions.

\textbf{Dataset.}
We tile $S \in \{3, 5, 7\}$ DOCCI~\citep{OnoeDocci2024} training images side-by-side ($224{\times}224$ per panel) without replacement, so every image in the 9{,}647-image training set appears once per epoch (3{,}215 / 1{,}929 / 1{,}378 composites for $S=3/5/7$); the 5{,}000-image official test set yields 1{,}666 / 1{,}000 / 714 evaluation composites.
Each panel's reference is the first three sentences of the corresponding DOCCI caption.

\textbf{Metric.}
We use per-panel BERTScore-F1 (baseline-rescaled) with \texttt{deberta-xlarge-mnli} as the reward. Because each panel is an independent source image with its own reference, per-panel scoring is well-posed for this task; we report the per-panel mean BERTScore-F1 in Table~\ref{tab:docci-main}, computed on the full test set at the final checkpoint of each run.
ROUGE-L, METEOR~\citep{banerjee2005meteor}, CIDEr, and BLEU-1/2/3/4 are also measured as cross-metric checks.

\textbf{Implementation.}
We train Qwen3-VL-2B/4B-Instruct~\citep{bai2025qwen3} with AdamW~\citep{loshchilov2017decoupled} (weight decay $0.01$) at a flat learning rate of $5\times 10^{-6}$, gradient clipping at norm $1.0$, KL coefficient $0.01$, PPO clip $\epsilon=0.2$, global batch size $8$, group size $G=8$, and bf16 mixed precision, for one image-epoch (402 / 241 / 172 gradient steps for $S=3/5/7$), following standard practice for compute-constrained RL training.
Each panel is decoded with a per-segment cap of $180$ tokens.

\subsection{Multi-chart long-form VQA}
\label{sec:vqa-exp}

\textbf{Task and setting.}
A group of $C$ chart panels and $S{=}4$ questions are fed to the model, which emits $S$ answer segments across an $S$-turn user--assistant conversation; each segment corresponds to the model's answer to one question.
Chart panels are attached only on the first user turn; subsequent turns carry only the next question and inherit the panel images from the conversation history.
We again compare Holistic GRPO (scalar advantage from the mean-segment reward) and SD-GRPO (per-segment advantages).
In addition, we vary a reasoning axis: in \emph{direct} mode the model emits the answer directly, while in \emph{reasoning} mode it is prompted to emit a short reasoning turn before each answer (visualization in Figure~\ref{fig:result}, prompts in Appendix~\ref{app:vqa-prompts}).
Each variant is run under both $\sigma$-normalized GRPO and Dr.~GRPO~\citep{liu2025understanding} with matching hyperparameters.

\textbf{Dataset.}
We adapt MultiChartQA~\citep{zhu2025multichartqa} into long-form VQA by combining the four questions per chart group into a single $S{=}4$ segment response, yielding $382$ training and $68$ evaluation chart groups.
This produces $1{,}528$ training questions ($1{,}063$ short single-value, $465$ compound semicolon-separated multi-fact) and $272$ evaluation questions ($183$ short, $89$ compound). Chart panels are resized so the longest edge is at most $336$~px.

\textbf{Metric.}
We compute exact-match for short targets ($\le 6$ tokens, no semicolon) and ROUGE-L F1 for compound or longer targets as the training reward.
For evaluation, we report mean per-segment exact-match accuracy in Table~\ref{tab:vqa-main}, computed on the full evaluation set at the final checkpoints.

\textbf{Implementation.}
We train Qwen3-VL-2B/4B/8B-Instruct with AdamW (weight decay $0.01$) at a flat learning rate of $5\times 10^{-6}$, gradient clipping at norm $1.0$, KL coefficient $0$, PPO clip $\epsilon=0.2$, global batch size $8$, group size $G=8$, and bf16 mixed precision, for $3$ epochs.
Each answer segment is capped at $20$ tokens, with an additional $64$-token reasoning turn before each answer in~\emph{reasoning} mode.

\subsection{Composite scientific figure captioning}
\label{sec:mmsci-exp}

\textbf{Task and setting.}
A composite scientific figure containing $S$ labeled subfigures $(a),(b),(c),\ldots$ is fed to the model, which emits all $S$ subcaptions in a single autoregressive pass, separated by line-start label markers (Appendix~\ref{app:mmsci-prompts}, visualization in Figure~\ref{fig:appendix}).
Unlike previous experiments, $S$ varies across examples (range $2$--$7$), showcasing the natural variable-$S$ extension of SD-GRPO.
In addition to Holistic GRPO (scalar reward from concatenated subcaptions) and SD-GRPO (per-section rewards), we evaluate a \emph{Blend} ($0.5\cdot\text{SD-GRPO}+0.5\cdot\text{Holistic}$ at the token-advantage level) to account for potential semantic entanglement between subcaptions. For ablation, we include \textit{mean-GRPO} that replaces the concat-level holistic reward with the mean per-section reward.

\textbf{Dataset.}
The official MMSci~\citep{li2024mmsci} release was unavailable, so we rebuild the dataset using the authors' released crawler, keeping all five \textit{Nature Communications} categories~\citep{hallowell2017mtorc2,liu2013photolatently}. We then filter to figures whose captions exhibit a clean monotonic $(a)(b)(c)\ldots$ subcaption structure, retaining $39{,}949$ training and $400$ evaluation figures under the official train/dev partition ($144$K and $1{,}418$ subcaptions, respectively). The training-set distribution over panel counts $S$ is $\{2\!:\!29.5\%,\ 3\!:\!24.2\%,\ 4\!:\!21.5\%,\ 5\!:\!11.6\%,\ 6\!:\!9.0\%,\ 7\!:\!4.4\%\}$.

\textbf{Metric.}
We blend two complementary signals, SciBERT~\citep{beltagy2019scibert} for token-level scientific-term meaningfulness and BARTScore~\citep{yuan2021bartscore} for sequence-level adequacy on the multi-sentence subcaptions, as the reward. Specifically, $r = 0.5\cdot\mathrm{SciBERT\text{-}F1} + 0.5\cdot\mathrm{BARTScore}_{[0,1]}$, where $\mathrm{SciBERT\text{-}F1}$ is BERTScore-style matching between SciBERT layer-9 token embeddings and $\mathrm{BARTScore}_{[0,1]}$ is BARTScore log-likelihood linearly mapped from $[-5, 0]$ to $[0, 1]$.
To ensure correct output format, we use a format gate where format-invalid responses have reward floored to $-1$.
We report both views of $r$ in Table~\ref{tab:mmsci-train} for fairness across credit-assignment schemes: \emph{concat $r$} (the scalar Holistic GRPO directly optimizes) and \emph{section-mean $r$} (mean of per-section rewards). Both are computed on the full evaluation set at the final checkpoints. We additionally report two GPT-4o text-only judge views~\citep{hurst2024gpt} in Table~\ref{tab:mmsci-judge}: (1) an absolute rater that scores each candidate against the human reference on accuracy, completeness, and fluency (1.0--5.0), and (2) a head-to-head pairwise judge. Detailed implementations are available in Appendix~\ref{app:mmsci-pairwise}.

\begin{table}
  \caption{Multi-panel DOCCI dense-captioning results across segment counts $S$, model scale, and GRPO variant. Values are per-panel mean BERTScore-F1 averaged over three seeds. $\Delta = \text{SD-GRPO} - \text{Holistic}$ (positive indicates SD-GRPO improvement). SD-GRPO outperforms Holistic in every configuration, with gains growing as $S$ increases. Detailed results in Appendix~\ref{app:docci-results}.}
  \label{tab:docci-main}
  \centering
  \small
  \begin{tabular*}{\linewidth}{@{\extracolsep{\fill}}lccccccccc@{}}
    \toprule
    & & & \multicolumn{3}{c}{GRPO} & \multicolumn{3}{c}{Dr.~GRPO} \\
    \cmidrule(lr){4-6}\cmidrule(lr){7-9}
    Model & $S$ & Base & Holistic & SD-GRPO & $\Delta$ & Holistic & SD-GRPO & $\Delta$ \\
    \midrule
    Qwen3-VL-2B & 3 & $0.261$ & $0.370$ & $\mathbf{0.383}$ & $+0.013$ & $0.296$ & $\mathbf{0.322}$ & $+0.026$ \\
                & 5 & $0.226$ & $0.354$ & $\mathbf{0.368}$ & $+0.014$ & $0.270$ & $\mathbf{0.308}$ & $+0.038$ \\
                & 7 & $0.192$ & $0.332$ & $\mathbf{0.357}$ & $+0.025$ & $0.252$ & $\mathbf{0.295}$ & $+0.043$ \\
    \midrule
    Qwen3-VL-4B & 3 & $0.242$ & $0.365$ & $\mathbf{0.380}$ & $+0.015$ & $0.263$ & $\mathbf{0.280}$ & $+0.017$ \\
                & 5 & $0.219$ & $0.341$ & $\mathbf{0.353}$ & $+0.012$ & $0.237$ & $\mathbf{0.262}$ & $+0.025$ \\
                & 7 & $0.209$ & $0.323$ & $\mathbf{0.351}$ & $+0.028$ & $0.228$ & $\mathbf{0.253}$ & $+0.025$ \\
    \bottomrule
  \end{tabular*}
\end{table}

\textbf{Implementation.}
We train Qwen3-VL-4B/8B-Instruct with AdamW (weight decay $0.01$) at a flat learning rate of $5\times 10^{-6}$, gradient clipping at norm $1.0$, KL coefficient $0.04$ (higher than the standard $0.01$, for stability with the format gate), PPO clip $\epsilon{=}0.2$, global batch size $64$, group size $G{=}8$, and bf16 mixed precision, for one image-epoch ($624$ gradient steps).
Images are capped at $200{,}704$ pixels, max prompt length is $4{,}096$ tokens, and max response length is $1{,}024$ tokens.

\begin{table}
  \caption{Multi-chart long-form VQA results. Values are mean exact-match accuracy across 3 seeds. In reasoning mode, the model is instructed to emit a short reasoning turn before each answer. 
  Bold marks the best values in each model size.
  Detailed results available in Appendix~\ref{app:vqa-results}.}
  \label{tab:vqa-main}
  \centering
  \small
  \begin{tabular*}{\linewidth}{@{\extracolsep{\fill}}llccccccc@{}}
    \toprule
    & & & \multicolumn{3}{c}{GRPO} & \multicolumn{3}{c}{Dr.~GRPO} \\
    \cmidrule(lr){4-6}\cmidrule(lr){7-9}
    Model & Mode & Base & Holistic & SD-GRPO & $\Delta$ & Holistic & SD-GRPO & $\Delta$ \\
    \midrule
    Qwen3-VL-2B & Direct    & $25.4$ & $29.5$ & $29.1$          & $-0.4$ & $29.2$          & $31.1$ & $+1.9$ \\
                & Reasoning & $23.2$ & $30.6$          & $\mathbf{31.6}$ & $+1.0$ & $29.4$          & $\mathbf{31.3}$ & $+1.9$ \\
    \midrule
    Qwen3-VL-4B & Direct    & $30.5$ & $33.3$          & $34.1$ & $+0.8$ & $34.8$ & $33.1$          & $-1.7$ \\
                & Reasoning & $32.4$ & $35.3$          & $\mathbf{37.6}$ & $+2.3$ & $35.7$          & $\mathbf{38.9}$ & $+3.2$ \\
    \midrule
    Qwen3-VL-8B & Direct    & $34.2$ & $39.1$ & $37.6$          & $-1.5$ & $37.5$ & $37.2$          & $-0.3$ \\
                & Reasoning & $36.8$ & $39.1$          & $\mathbf{41.3}$ & $+2.2$ & $40.2$          & $\mathbf{41.2}$ & $+1.0$ \\
    \bottomrule
  \end{tabular*}
\end{table}

\begin{figure}[t]
  \centering
  \includegraphics[width=\textwidth]{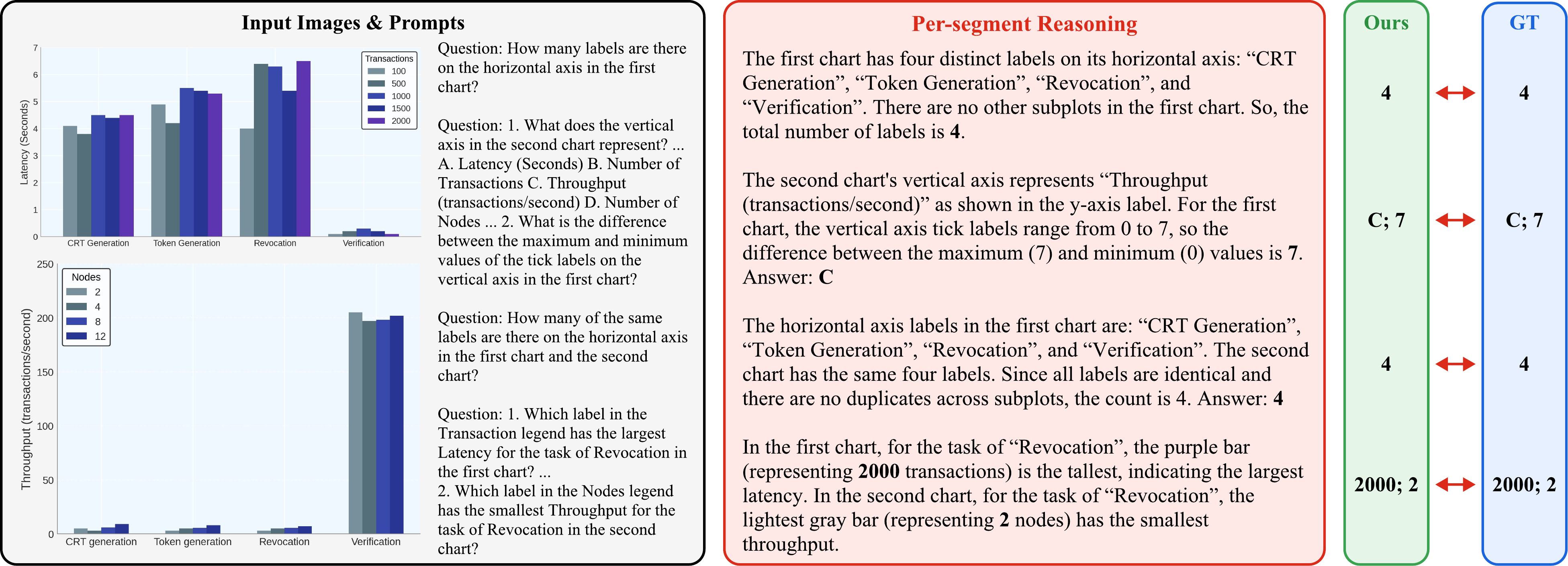}
 \caption{Reasoning trajectories from our SD-GRPO-trained model on multi-chart long-form VQA. The model learns to correctly identify the referenced chart, read the relevant values, and execute the required arithmetic before producing each answer.}
  \label{fig:result}
\end{figure}

\section{Results}
\label{sec:results}

\subsection{Multi-panel dense image captioning}
\label{sec:docci-results}

Table~\ref{tab:docci-main} shows SD-GRPO outperforms its Holistic counterpart across every setting, yielding $1.2$--$2.8$ absolute BERTScore points on the standard-normalized GRPO baseline and $1.7$--$4.3$ points under Dr.~GRPO~\citep{liu2025understanding}. The gap is consistent across all conditions and generally grows with segment count $S$: as segment counts increase the signals are diluted even further when averaged into a single scalar. Cross-metric results on BLEU, METEOR, ROUGE-L, and CIDEr (Appendix~\ref{app:docci-results}) show SD-GRPO generates outputs of generally higher textual quality across the standard NLG metrics.

\subsection{Multi-chart long-form VQA}
\label{sec:vqa-results}

Table~\ref{tab:vqa-main} reports per-segment exact-match accuracy averaged over three seeds.
All RL-trained variants substantially exceed the untrained baselines (Base) across model scales, indicating that the algorithmic gap is on top of a clear RL-training effect. SD-GRPO consistently outperforms Holistic GRPO in the reasoning regime, yielding $+1.0$ to $+3.2$ absolute points.
The advantage holds under both $\sigma$-normalized GRPO and Dr.GRPO and at all three model scales, indicating per-segment credit assignment as the source rather than interaction with the standardization step. Models trained with SD-GRPO produce valid, meaningful reasoning CoT before answering the questions (Figure~\ref{fig:result}).
In direct mode, by contrast, the comparison is mixed-sign ($-1.7$ to $+2.3$).
We attribute the mode asymmetry to our theory in Section~\ref{sec:residual}. The short, single-token answers in direct mode naturally bound the cross-segment residual; conversely, the extensive token count of a reasoning chain amplifies this noise, making SD-GRPO's neutralization of credit misattribution structurally load-bearing.
Consistent with this, the gain further concentrates on compound (long, multi-fact) segments, where per-segment reward variance and token spans are both large (Appendix~\ref{app:vqa-results}, Table~\ref{app:vqa-compound}). 
Reasoning-mode runs also consistently outperform direct-mode runs at every scale, and SD-GRPO's reliable advantage in this regime highlights its particular importance for reasoning models. 
Moreover, SD-GRPO tends to produce longer reasoning chains than Holistic GRPO (Appendix~\ref{app:vqa-results}, Table~\ref{app:vqa-reasoning-tokens}), consistent with per-segment advantages rewarding reasoning that yields correct per-segment answers.

\subsection{Composite scientific figure captioning}
\label{sec:mmsci-results}

The two pure variants exhibit complementary failure modes: Holistic GRPO under-rewards per-section quality, while SD-GRPO under-rewards figure-level coherence (Table~\ref{tab:mmsci-train}). In this regime of high semantic entanglement, the \textit{Blend} variant emerges as the superior choice, as it wins both concat reward (Holistic's training reward) and per-section reward (SD-GRPO's training reward).
Cross-seed standard deviations are tight, and Blend's section-mean lead over Holistic exceeds the seed std by $3$–$6\sigma$ at both scales (Appendix~\ref{app:mmsci-train-std}).
The GPT-4o text-only judge results in Table~\ref{tab:mmsci-judge} show Blend's single-rater grade is highest at both scales, and the head-to-head pairwise judge ranks Blend ahead of both pure variants by $+8.5$ to $+17.5$ percentage points overall. These results are further corroborated by traditional NLP metrics (ROUGE-L, BERTScore-F1, CIDEr) in Appendix~\ref{app:mmsci-cross-metrics}.
Blend is also robust to the mixing weight (Appendix~\ref{app:mmsci-alpha}).
Furthermore, we run a \textit{mean-GRPO} ablation that swaps Holistic's concat-level scalar for the section-mean scalar (Appendix~\ref{app:mmsci-holistic-avg}), and observe it falls short of concat-Holistic and falls well short of SD-GRPO and Blend. This demonstrates isolating segment-specific gradients is more critical than simply averaging granular rewards across the entire sequence, consistent with our framing of credit misattribution.

\begin{table}[!t]
  \caption{Composite scientific figure captioning main results. Values are averaged across 3 seeds. Concat reward is $r{=}0.5\cdot\mathrm{SciBERT\text{-}F1}+0.5\cdot\mathrm{BARTScore}_{[0,1]}$ on the concatenated response (reward of Holistic GRPO); Section-mean reward is per-section mean of $r$ (reward of SD-GRPO).}
  \label{tab:mmsci-train}
  \centering
  \small
  \begin{tabular*}{\linewidth}{@{\extracolsep{\fill}}lcccccccc@{}}
    \toprule
    & \multicolumn{4}{c}{Concat reward} & \multicolumn{4}{c}{Section-mean reward} \\
    \cmidrule(lr){2-5}\cmidrule(lr){6-9}
    Model & Base & Holistic & SD-GRPO & Blend & Base & Holistic & SD-GRPO & Blend \\
    \midrule
    Qwen3-VL-4B & $0.399$ & $0.416$ & $0.412$ & $\mathbf{0.420}$ & $0.382$ & $0.392$ & $\mathbf{0.404}$ & $\mathbf{0.404}$ \\
    Qwen3-VL-8B & $0.401$ & $0.419$ & $0.418$ & $\mathbf{0.427}$ & $0.383$ & $0.399$ & $\mathbf{0.411}$ & $\mathbf{0.411}$ \\
    \bottomrule
  \end{tabular*}
\end{table}

\begin{table}[!t]
  \caption{Composite scientific figure captioning results with GPT-4o text-only judge. Preference = head-to-head preference margin (Blend's win\% $-$ opponent's win\%), position-randomized; positive means Blend wins. Detailed results in Appendix~\ref{app:mmsci-pairwise}.}
  \label{tab:mmsci-judge}
  \centering
  \small
  \begin{tabular*}{\linewidth}{@{\extracolsep{\fill}}lccccc@{}}
    \toprule
    & \multicolumn{3}{c}{Grading (1.0--5.0, $\uparrow$)} & \multicolumn{2}{c}{Preference: Blend wins by ($\uparrow$)} \\
    \cmidrule(lr){2-4}\cmidrule(lr){5-6}
    Model & Holistic & SD-GRPO & Blend & vs Holistic & vs SD-GRPO \\
    \midrule
    Qwen3-VL-4B & $2.30$ & $2.30$ & $\mathbf{2.37}$ & $+10.5\%$ & $+11.2\%$ \\
    Qwen3-VL-8B & $2.40$ & $2.36$ & $\mathbf{2.45}$ & $+8.5\%$  & $+17.5\%$ \\
    \bottomrule
  \end{tabular*}
\end{table}

\section{Conclusion}
\label{sec:conclusion}

In this study, we propose Segment Decomposed GRPO (SD-GRPO), a variant of GRPO that replaces the rollout-level holistic advantage with per-segment z-normalized advantages for long-form VL tasks.
We identified a cross-segment residual that Holistic GRPO injects into each segment's gradient, a residual whose magnitude grows with segment length and is eliminated by SD-GRPO's construction.
Across three experimental settings, our empirical results consistently validate this analysis. When segments are semantically entangled, a simple blend of segment and holistic advantages preserves segment-level credit while restoring global coherence, outperforming both pure variants. Together, these results position localized credit assignment as a structurally favorable direction for the RL fine-tuning of long-form VL generation, particularly for reasoning models where extensive rollouts otherwise amplify the credit misattribution effect.

\paragraph{Limitations and future work.}
Our experiments are restricted to tasks with an explicit segmental structure. Applying SD-GRPO on tasks without natural segmentation would require an upstream LLM-based segmenter. Furthermore, our reasoning-mode results are limited to the multi-chart VQA task. The lead of SD-GRPO over GRPO persists under reasoning mode for other tasks, but absolute scores fall below direct mode because the policy fails to produce meaningful CoT (Appendix~\ref{app:think-results}). Looking forward, we plan to extend SD-GRPO to agentic VLMs for long-form VL tasks, where output segments naturally bound agentic steps.
\clearpage
{
\small
\bibliographystyle{plainnat}
\bibliography{ref}
}

\newpage
\appendix

\section{Implementation Details}
\subsection{Additional implementation details across experiments}
During training, rollouts are sampled at temperature $T=1$; evaluation uses greedy decoding ($T=0$).

\subsection{Computational cost}
\label{app:comp_cost}
\begin{table}[h]
\centering
\small
\caption{Computational cost of training.}
\begin{tabular}{lcccc}
\toprule
Model & GPUs & Memory / GPU & Total Time & GPU-hours \\
\midrule
Multi-panel dense image captioning & 8 $\times$ H200 & 70\,GB & 62h & 496h \\
Multi-chart long-form VQA & 8 $\times$ H200 & 80\,GB & 48h & 384h \\
Composite scientific figure captioning & 8 $\times$ H200 & 90\,GB & 207h & 1,656h \\
\bottomrule
\end{tabular}
\label{tab:compute_cost}
\end{table}

\section{Multi-panel dense image captioning: Prompts and decoding format}
\label{app:docci-prompts}

\paragraph{System prompt.}
The system prompt is topic-free and parameterized by the pool size $S$:

\begin{tcolorbox}[breakable, title=System prompt,
                  fonttitle=\bfseries\small,
                  colback=gray!5, colframe=black,
                  colbacktitle=black, coltitle=white]
\small\ttfamily\noindent
You will see a composite image with \{S\} panels arranged left to right. Panels are numbered 1 to \{S\} from left to right.\\
For each panel, write three short descriptive sentences on one numbered line.\\
Format:\\
Answer 1) <your three sentences for panel 1>\\
Answer 2) <your three sentences for panel 2>\\
\dots\\
Answer \{S\}) <your three sentences for panel \{S\}>
\end{tcolorbox}

\paragraph{User turn.}
The user turn contains two content blocks: the rendered $S$-panel composite image, followed by the literal text \texttt{"Describe each panel."}.

\paragraph{Controlled multi-turn decoding.}
The assistant response is generated piecewise across $S$ panels. For each panel $s = 1, \ldots, S$, the trainer deterministically appends the marker \texttt{"Answer s) "} to the context and samples until the model emits "\textbackslash n". Markers are always forced, never sampled, and are masked from the loss; reward and advantage are computed only over the sampled tokens. This guarantees the clean per-panel token boundaries that SD-GRPO's panel-level advantage shaping requires, and stays format-proof.

\paragraph{Resulting response structure.}
After the loop completes, the assembled assistant response under direct mode (\texttt{max\_output\_tokens}=180) has the form

\begin{tcolorbox}[breakable, title={Assistant response},
                  fonttitle=\bfseries\small,
                  colback=gray!5, colframe=black,
                  colbacktitle=black, coltitle=white]
\small\ttfamily\noindent
Answer 1) <model-generated text for panel 1>\\
Answer 2) <model-generated text for panel 2>\\
\dots\\
Answer S) <model-generated text for panel S>
\end{tcolorbox}

\section{Multi-panel dense image captioning: Detailed results}
\label{app:docci-results}
This section complements the headline BERTScore-F1 results of Section~\ref{sec:docci-exp} with cross-metric numbers across the standard image-captioning suite (Tables~\ref{tab:docci-std},~\ref{tab:docci-xmetric}).
Values are computed on the full $5{,}000$-image DOCCI test set at the final checkpoint of each run, identical to the protocol used for Table~\ref{tab:docci-main}.
The SD-GRPO-over-Holistic ordering observed in Table~\ref{tab:docci-main} is preserved across all seven metrics in $10$ of $12$ (Model, $S$, Algorithm) conditions; the two exceptions occur at $S{=}3$ under $\sigma$-normalized GRPO, where Holistic edges out SD-GRPO on B-1, B-2, and CIDEr while SD-GRPO retains its lead on B-3, B-4, METEOR, and ROUGE-L.

\begin{table}[t]
  \caption{Multi-panel DOCCI dense-captioning results across segment counts $S$, model scale, and GRPO variant. Values are BERTScore-F1 (per-panel mean, baseline-rescaled), mean $\pm$ std over three seeds. SD-GRPO outperforms Holistic GRPO in every cell.}
  \label{tab:docci-std}
  \centering
  \small
  \setlength{\tabcolsep}{4pt}
  \begin{tabular*}{\linewidth}{@{\extracolsep{\fill}}lccccccc@{}}
    \toprule
    & & & \multicolumn{2}{c}{GRPO} & \multicolumn{2}{c}{Dr.~GRPO} \\
    \cmidrule(lr){4-5}\cmidrule(lr){6-7}
    Model & $S$ & Base & Holistic & SD-GRPO & Holistic & SD-GRPO \\
    \midrule
    Qwen3-VL-2B & 3 & $0.261$ & $0.370_{\pm 0.004}$ & $\mathbf{0.383}_{\pm 0.001}$ & $0.296_{\pm 0.001}$ & $\mathbf{0.322}_{\pm 0.002}$ \\
                & 5 & $0.226$ & $0.354_{\pm 0.001}$ & $\mathbf{0.368}_{\pm 0.001}$ & $0.270_{\pm 0.001}$ & $\mathbf{0.308}_{\pm 0.002}$ \\
                & 7 & $0.192$ & $0.332_{\pm 0.005}$ & $\mathbf{0.357}_{\pm 0.001}$ & $0.252_{\pm 0.002}$ & $\mathbf{0.295}_{\pm 0.001}$ \\
    \midrule
    Qwen3-VL-4B & 3 & $0.242$ & $0.365_{\pm 0.012}$ & $\mathbf{0.380}_{\pm 0.012}$ & $0.263_{\pm 0.002}$ & $\mathbf{0.280}_{\pm 0.001}$ \\
                & 5 & $0.219$ & $0.341_{\pm 0.002}$ & $\mathbf{0.353}_{\pm 0.015}$ & $0.237_{\pm 0.000}$ & $\mathbf{0.262}_{\pm 0.002}$ \\
                & 7 & $0.209$ & $0.323_{\pm 0.010}$ & $\mathbf{0.351}_{\pm 0.000}$ & $0.228_{\pm 0.002}$ & $\mathbf{0.253}_{\pm 0.001}$ \\
    \bottomrule
  \end{tabular*}
\end{table}

\begin{table}
  \caption{Cross-metric DOCCI results across $S$, model scale, and GRPO variant. Values are mean $\pm$ std over three seeds. Per-panel mean for BLEU-1/2/3/4, METEOR, and ROUGE-L; corpus-level for CIDEr (its TF-IDF weighting is defined corpus-wide).}
  \label{tab:docci-xmetric}
  \centering
  \scriptsize
  \setlength{\tabcolsep}{4pt}
  \begin{tabular*}{\linewidth}{@{\extracolsep{\fill}}llllccccccc@{}}
    \toprule
    Model & $S$ & Algo & Method & B-1 & B-2 & B-3 & B-4 & METEOR & R-L & CIDEr \\
    \midrule
    Qwen3- & 3 & GRPO     & Holistic & $\mathbf{25.8}_{\pm 0.9}$ & $\mathbf{14.2}_{\pm 0.5}$ & $7.6_{\pm 0.2}$ & $4.1_{\pm 0.1}$ & $20.0_{\pm 0.3}$ & $26.2_{\pm 0.4}$ & $\mathbf{13.3}_{\pm 1.2}$ \\
    VL-2B    &   &          & SD-GRPO  & $25.1_{\pm 1.6}$ & $14.1_{\pm 0.8}$ & $\mathbf{7.7}_{\pm 0.4}$ & $\mathbf{4.2}_{\pm 0.2}$ & $\mathbf{20.3}_{\pm 0.6}$ & $\mathbf{27.3}_{\pm 0.1}$ & $12.0_{\pm 1.4}$ \\
                &   & Dr.~GRPO & Holistic & $26.3_{\pm 0.3}$ & $12.2_{\pm 0.1}$ & $5.5_{\pm 0.0}$ & $2.8_{\pm 0.0}$ & $18.8_{\pm 0.1}$ & $23.1_{\pm 0.1}$ & $14.2_{\pm 0.5}$ \\
                &   &          & SD-GRPO  & $\mathbf{26.5}_{\pm 0.4}$ & $\mathbf{13.0}_{\pm 0.1}$ & $\mathbf{6.0}_{\pm 0.0}$ & $\mathbf{3.1}_{\pm 0.0}$ & $\mathbf{19.1}_{\pm 0.1}$ & $\mathbf{24.3}_{\pm 0.2}$ & $\mathbf{14.6}_{\pm 0.5}$ \\
    \cmidrule{2-11}
                & 5 & GRPO     & Holistic & $23.7_{\pm 1.5}$ & $12.7_{\pm 0.8}$ & $6.7_{\pm 0.4}$ & $3.6_{\pm 0.2}$ & $18.5_{\pm 0.6}$ & $25.8_{\pm 0.2}$ & $9.4_{\pm 1.3}$ \\
                &   &          & SD-GRPO  & $\mathbf{25.4}_{\pm 1.3}$ & $\mathbf{14.0}_{\pm 0.6}$ & $\mathbf{7.4}_{\pm 0.3}$ & $\mathbf{4.0}_{\pm 0.1}$ & $\mathbf{19.8}_{\pm 0.5}$ & $\mathbf{26.8}_{\pm 0.1}$ & $\mathbf{10.8}_{\pm 1.3}$ \\
                &   & Dr.~GRPO & Holistic & $24.4_{\pm 0.2}$ & $10.9_{\pm 0.1}$ & $4.7_{\pm 0.1}$ & $2.4_{\pm 0.0}$ & $17.1_{\pm 0.1}$ & $22.4_{\pm 0.1}$ & $10.3_{\pm 0.4}$ \\
                &   &          & SD-GRPO  & $\mathbf{25.2}_{\pm 0.2}$ & $\mathbf{12.1}_{\pm 0.2}$ & $\mathbf{5.4}_{\pm 0.1}$ & $\mathbf{2.7}_{\pm 0.1}$ & $\mathbf{17.9}_{\pm 0.1}$ & $\mathbf{23.9}_{\pm 0.1}$ & $\mathbf{10.6}_{\pm 0.2}$ \\
    \cmidrule{2-11}
                & 7 & GRPO     & Holistic & $21.4_{\pm 1.5}$ & $11.0_{\pm 0.8}$ & $5.4_{\pm 0.6}$ & $2.8_{\pm 0.3}$ & $16.9_{\pm 0.7}$ & $24.4_{\pm 0.5}$ & $6.5_{\pm 1.1}$ \\
                &   &          & SD-GRPO  & $\mathbf{24.3}_{\pm 1.0}$ & $\mathbf{13.1}_{\pm 0.5}$ & $\mathbf{6.8}_{\pm 0.2}$ & $\mathbf{3.6}_{\pm 0.1}$ & $\mathbf{18.9}_{\pm 0.4}$ & $\mathbf{26.0}_{\pm 0.1}$ & $\mathbf{8.5}_{\pm 0.6}$ \\
                &   & Dr.~GRPO & Holistic & $22.8_{\pm 0.1}$ & $9.8_{\pm 0.0}$ & $4.0_{\pm 0.0}$ & $2.1_{\pm 0.0}$ & $15.9_{\pm 0.0}$ & $21.7_{\pm 0.1}$ & $7.2_{\pm 0.2}$ \\
                &   &          & SD-GRPO  & $\mathbf{23.3}_{\pm 0.6}$ & $\mathbf{10.9}_{\pm 0.3}$ & $\mathbf{4.7}_{\pm 0.2}$ & $\mathbf{2.4}_{\pm 0.1}$ & $\mathbf{16.6}_{\pm 0.3}$ & $\mathbf{23.1}_{\pm 0.0}$ & $\mathbf{7.8}_{\pm 0.8}$ \\
    \midrule
    Qwen3- & 3 & GRPO     & Holistic & $\mathbf{29.6}_{\pm 1.9}$ & $16.0_{\pm 0.8}$ & $8.5_{\pm 0.3}$ & $4.5_{\pm 0.1}$ & $21.9_{\pm 0.8}$ & $25.7_{\pm 1.3}$ & $\mathbf{18.4}_{\pm 2.2}$ \\
    VL-4B      &   &          & SD-GRPO  & $28.8_{\pm 3.7}$ & $16.0_{\pm 1.7}$ & $\mathbf{8.7}_{\pm 0.7}$ & $\mathbf{4.7}_{\pm 0.3}$ & $\mathbf{22.1}_{\pm 1.8}$ & $\mathbf{27.0}_{\pm 1.1}$ & $16.9_{\pm 3.9}$ \\
                &   & Dr.~GRPO & Holistic & $24.5_{\pm 0.3}$ & $10.2_{\pm 0.1}$ & $4.2_{\pm 0.1}$ & $2.1_{\pm 0.0}$ & $17.4_{\pm 0.2}$ & $19.7_{\pm 0.1}$ & $12.7_{\pm 0.2}$ \\
                &   &          & SD-GRPO  & $\mathbf{25.6}_{\pm 0.7}$ & $\mathbf{11.0}_{\pm 0.3}$ & $\mathbf{4.6}_{\pm 0.1}$ & $\mathbf{2.3}_{\pm 0.0}$ & $\mathbf{18.1}_{\pm 0.4}$ & $\mathbf{20.3}_{\pm 0.1}$ & $\mathbf{13.9}_{\pm 0.5}$ \\
    \cmidrule{2-11}
                & 5 & GRPO     & Holistic & $28.1_{\pm 0.8}$ & $14.6_{\pm 0.7}$ & $7.2_{\pm 0.8}$ & $3.8_{\pm 0.4}$ & $20.3_{\pm 0.7}$ & $25.0_{\pm 0.7}$ & $14.7_{\pm 0.2}$ \\
                &   &          & SD-GRPO  & $\mathbf{29.1}_{\pm 2.3}$ & $\mathbf{15.4}_{\pm 0.8}$ & $\mathbf{7.5}_{\pm 0.3}$ & $\mathbf{4.0}_{\pm 0.2}$ & $\mathbf{21.3}_{\pm 0.7}$ & $\mathbf{25.4}_{\pm 1.7}$ & $\mathbf{15.1}_{\pm 2.7}$ \\
                &   & Dr.~GRPO & Holistic & $23.9_{\pm 0.1}$ & $9.3_{\pm 0.1}$ & $3.8_{\pm 0.0}$ & $1.9_{\pm 0.0}$ & $16.5_{\pm 0.1}$ & $19.3_{\pm 0.0}$ & $10.5_{\pm 0.2}$ \\
                &   &          & SD-GRPO  & $\mathbf{25.2}_{\pm 0.0}$ & $\mathbf{10.5}_{\pm 0.0}$ & $\mathbf{4.4}_{\pm 0.0}$ & $\mathbf{2.2}_{\pm 0.0}$ & $\mathbf{17.6}_{\pm 0.0}$ & $\mathbf{20.3}_{\pm 0.1}$ & $\mathbf{11.4}_{\pm 0.2}$ \\
    \cmidrule{2-11}
                & 7 & GRPO     & Holistic & $27.3_{\pm 1.0}$ & $13.7_{\pm 0.6}$ & $6.4_{\pm 0.7}$ & $3.3_{\pm 0.4}$ & $19.4_{\pm 0.6}$ & $23.9_{\pm 1.1}$ & $11.2_{\pm 0.5}$ \\
                &   &          & SD-GRPO  & $\mathbf{28.6}_{\pm 0.3}$ & $\mathbf{15.4}_{\pm 0.2}$ & $\mathbf{7.8}_{\pm 0.1}$ & $\mathbf{4.1}_{\pm 0.0}$ & $\mathbf{20.8}_{\pm 0.2}$ & $\mathbf{26.1}_{\pm 0.1}$ & $\mathbf{12.4}_{\pm 0.5}$ \\
                &   & Dr.~GRPO & Holistic & $23.0_{\pm 0.1}$ & $8.6_{\pm 0.1}$ & $3.4_{\pm 0.1}$ & $1.8_{\pm 0.0}$ & $15.7_{\pm 0.1}$ & $19.1_{\pm 0.1}$ & $8.3_{\pm 0.2}$ \\
                &   &          & SD-GRPO  & $\mathbf{24.2}_{\pm 0.2}$ & $\mathbf{9.9}_{\pm 0.1}$ & $\mathbf{4.0}_{\pm 0.1}$ & $\mathbf{2.0}_{\pm 0.0}$ & $\mathbf{16.7}_{\pm 0.2}$ & $\mathbf{20.0}_{\pm 0.1}$ & $\mathbf{9.0}_{\pm 0.4}$ \\
    \bottomrule
  \end{tabular*}
\end{table}

\section{Multi-chart long-form VQA: Prompts and conversation format}
\label{app:vqa-prompts}

\paragraph{System prompt.}
The system prompt is mode-dependent. In \textbf{Direct} mode the model is instructed to emit only the short answer:

\begin{tcolorbox}[breakable, title={System prompt---direct mode},
                  fonttitle=\bfseries\small,
                  colback=gray!5, colframe=black,
                  colbacktitle=black, coltitle=white]
\small\ttfamily\noindent
Answer questions about a group of 2--3 charts. Reply with just the short answer (a number, percentage, option letter, or short phrase; for two-value questions, separate with '; '), nothing else.
\end{tcolorbox}

\noindent In \textbf{Reasoning} mode, the prompt scopes a 1--2 sentence reasoning step before the model is asked for its final answer:

\begin{tcolorbox}[breakable, title={System prompt---reasoning mode},
                  fonttitle=\bfseries\small,
                  colback=gray!5, colframe=black,
                  colbacktitle=black, coltitle=white]
\small\ttfamily\noindent
Answer questions about a group of 2--3 charts. Each turn has two steps: first you think out loud in 1--2 short sentences (identify the referenced chart, read the relevant values, and state your reasoning). You will then be asked for the final short answer and must reply with just the short answer (a number, percentage, option letter, or short phrase; for two-value questions, separate with '; '), nothing else.
\end{tcolorbox}

\noindent In Reasoning mode the system message is followed by two in-context few-shot examples illustrating the reason-then-answer protocol; we omit them here for brevity.

\paragraph{User turn.}
For section $s=1$, the user turn carries the $C\!\le\!3$ chart panels as image attachments followed by the section's question text. For sections $s=2,\ldots,S$, the user turn carries only the question text; images are inherited from the conversation history.

\paragraph{Multi-turn decoding.}
Unlike the controlled forced-marker decoding used for DOCCI (Appendix~\ref{app:docci-prompts}), each section here is sampled as a real user--assistant turn in a multi-turn conversation. For each section $s = 1, \ldots, S$:
\begin{itemize}
\item In \textbf{Direct} mode, the model samples a short answer turn (max $20$ tokens, stops at \texttt{eos}).
\item In \textbf{Reasoning} mode, the model first samples a reasoning turn (max $64$ tokens), the trainer appends a fixed user follow-up \texttt{"Now give your final short answer only."}, and the model then samples the short answer turn (max $20$ tokens).
\end{itemize}
After each section, the assistant's outputs are committed to the conversation history before the next section's user question is appended. Per-section rewards are computed on the answer text only; the reasoning text is loss-shaped by the same per-section advantage but does not contribute to the reward.

\paragraph{Resulting response structure.}
Under Direct mode, each section is one user--assistant turn ($\texttt{max\_thinking\_tokens}=0$, $\texttt{max\_output\_tokens}=20$):

\begin{tcolorbox}[breakable, title={Conversation---direct mode},
                  fonttitle=\bfseries\small,
                  colback=gray!5, colframe=black,
                  colbacktitle=black, coltitle=white]
\small\noindent
\textbf{User:} [chart panel images] + <section 1 question>\\
\textbf{Assistant:} <model-generated answer for section 1>\\
\textbf{User:} <section 2 question>\\
\textbf{Assistant:} <model-generated answer for section 2>\\
\dots\\
\textbf{User:} <section S question>\\
\textbf{Assistant:} <model-generated answer for section S>
\end{tcolorbox}

\noindent and under Reasoning mode ($\texttt{max\_thinking\_tokens}=64$, $\texttt{max\_output\_tokens}=20$), each section produces two assistant turns (reasoning, then answer) separated by the fixed answer-prompt user message:

\begin{tcolorbox}[breakable, title={Conversation---reasoning mode},
                  fonttitle=\bfseries\small,
                  colback=gray!5, colframe=black,
                  colbacktitle=black, coltitle=white]
\small\noindent
\textbf{User:} [chart panel images] + <section 1 question>\\
\textbf{Assistant:} <model-generated reasoning for section 1>\\
\textbf{User:} Now give your final short answer only.\\
\textbf{Assistant:} <model-generated answer for section 1>\\
\textbf{User:} <section 2 question>\\
\textbf{Assistant:} <model-generated reasoning for section 2>\\
\textbf{User:} Now give your final short answer only.\\
\textbf{Assistant:} <model-generated answer for section 2>\\
\dots
\end{tcolorbox}

\section{Multi-chart long-form VQA: Detailed results}
\label{app:vqa-results}
For completeness, Table~\ref{tab:vqa-std} reports the same multi-chart long-form VQA results as Table~\ref{tab:vqa-main} with per-seed standard deviations across three seeds.
Table~\ref{app:vqa-reasoning-tokens} reports the mean number of reasoning tokens emitted per segment in reasoning mode, averaged over three seeds. SD-GRPO produces longer reasoning chains than Holistic GRPO under matched backbone and optimizer in $5$ of $6$ cells.
On the compound (multi-fact) subset of the evaluation set---answers containing semicolons, $89$ of $272$ total---reward is ROUGE-L F1 against the canonical answer-key string. Table~\ref{app:vqa-compound} reports the compound-subset reward at the final checkpoint of each run, averaged over three seeds. The SD-GRPO advantage is positive in $11$ of $12$ (model, mode, optimizer) cells; the single exception is 8B under GRPO in direct mode ($-3.6$).

\begin{table}
  \caption{Multi-chart long-form VQA results. Values are mean $\pm$ std exact-match accuracy across 3 seeds. All models are Qwen3-VL-Instruct variants; in reasoning mode, the model is instructed to emit a short reasoning turn before each answer.}
  \label{tab:vqa-std}
  \centering
  \small
  \begin{tabular*}{\linewidth}{@{\extracolsep{\fill}}llccccc@{}}
    \toprule
    & & & \multicolumn{2}{c}{GRPO} & \multicolumn{2}{c}{Dr.~GRPO} \\
    \cmidrule(lr){4-5}\cmidrule(lr){6-7}
    Model & Mode & Base & Holistic & SD-GRPO & Holistic & SD-GRPO \\
    \midrule
    Qwen3-VL-2B & Direct    & $25.4$ & $\mathbf{29.5}{\scriptstyle\pm0.8}$ & $29.1{\scriptstyle\pm0.7}$          & $29.2{\scriptstyle\pm1.0}$          & $\mathbf{31.1}{\scriptstyle\pm1.7}$ \\
                & Reasoning & $23.2$ & $30.6{\scriptstyle\pm1.4}$          & $\mathbf{31.6}{\scriptstyle\pm1.1}$ & $29.4{\scriptstyle\pm0.3}$          & $\mathbf{31.3}{\scriptstyle\pm2.3}$ \\
    \midrule
    Qwen3-VL-4B & Direct    & $30.5$ & $33.3{\scriptstyle\pm0.5}$          & $\mathbf{34.1}{\scriptstyle\pm0.5}$ & $\mathbf{34.8}{\scriptstyle\pm1.2}$ & $33.1{\scriptstyle\pm0.6}$          \\
                & Reasoning & $32.4$ & $35.3{\scriptstyle\pm0.8}$          & $\mathbf{37.6}{\scriptstyle\pm0.8}$ & $35.7{\scriptstyle\pm0.8}$          & $\mathbf{38.9}{\scriptstyle\pm0.2}$ \\
    \midrule
    Qwen3-VL-8B & Direct    & $34.2$ & $\mathbf{39.1}{\scriptstyle\pm1.2}$ & $37.6{\scriptstyle\pm1.1}$          & $\mathbf{37.5}{\scriptstyle\pm0.3}$ & $37.2{\scriptstyle\pm1.4}$          \\
                & Reasoning & $36.8$ & $39.1{\scriptstyle\pm1.9}$          & $\mathbf{41.3}{\scriptstyle\pm1.9}$ & $40.2{\scriptstyle\pm1.2}$          & $\mathbf{41.2}{\scriptstyle\pm2.6}$ \\
    \bottomrule
  \end{tabular*}
\end{table}

\begin{table}
  \caption{Mean reasoning tokens per segment in reasoning mode, mean $\pm$ std across 3 seeds.}
  \label{app:vqa-reasoning-tokens}
  \centering
  \small
  \begin{tabular*}{\linewidth}{@{\extracolsep{\fill}}lccccc@{}}
    \toprule
    & & \multicolumn{2}{c}{GRPO} & \multicolumn{2}{c}{Dr.~GRPO} \\
    \cmidrule(lr){3-4}\cmidrule(lr){5-6}
    Model & Base & Holistic & SD-GRPO & Holistic & SD-GRPO \\
    \midrule
    Qwen3-VL-2B & $16.5$ & $28.5{\scriptstyle\pm 6.0}$ & $\mathbf{35.9}{\scriptstyle\pm 0.9}$ & $21.4{\scriptstyle\pm 12.1}$ & $\mathbf{31.0}{\scriptstyle\pm 9.0}$ \\
    Qwen3-VL-4B & $15.5$ & $24.7{\scriptstyle\pm 6.5}$ & $\mathbf{25.3}{\scriptstyle\pm 5.9}$ & $\mathbf{31.3}{\scriptstyle\pm 2.0}$ & $26.1{\scriptstyle\pm 8.8}$ \\
    Qwen3-VL-8B & $31.8$ & $30.8{\scriptstyle\pm 1.5}$ & $\mathbf{35.2}{\scriptstyle\pm 0.6}$ & $22.8{\scriptstyle\pm 4.0}$ & $\mathbf{32.3}{\scriptstyle\pm 0.9}$ \\
    \bottomrule
  \end{tabular*}
\end{table}

\begin{table}[!t]
  \caption{Reward computed only on the compound subset of the long-form VQA evaluation set: per-segment ROUGE-L F1}
  \label{app:vqa-compound}
  \centering
  \small
  \begin{tabular*}{\linewidth}{@{\extracolsep{\fill}}llccccccc@{}}
    \toprule
    & & & \multicolumn{3}{c}{GRPO} & \multicolumn{3}{c}{Dr.~GRPO} \\
    \cmidrule(lr){4-6}\cmidrule(lr){7-9}
    Model & Mode & Base & Holistic & SD-GRPO & $\Delta$ & Holistic & SD-GRPO & $\Delta$ \\
    \midrule
    Qwen3-VL-2B & Direct    & $27.3$ & $34.1$          & $\mathbf{36.6}$ & $+2.5$ & $33.0$          & $\mathbf{41.5}$ & $+8.5$ \\
                & Reasoning & $21.4$ & $30.9$          & $\mathbf{36.7}$ & $+5.8$ & $30.2$          & $\mathbf{32.9}$ & $+2.7$ \\
    \midrule
    Qwen3-VL-4B & Direct    & $38.4$ & $41.3$          & $\mathbf{43.5}$ & $+2.2$ & $42.8$          & $\mathbf{43.4}$ & $+0.6$ \\
                & Reasoning & $40.4$ & $39.8$          & $\mathbf{43.7}$ & $+3.9$ & $43.0$          & $\mathbf{45.4}$ & $+2.4$ \\
    \midrule
    Qwen3-VL-8B & Direct    & $45.2$ & $\mathbf{53.2}$ & $49.6$          & $-3.6$ & $51.9$          & $\mathbf{53.4}$ & $+1.5$ \\
                & Reasoning & $47.2$ & $50.9$          & $\mathbf{53.3}$ & $+2.4$ & $50.2$          & $\mathbf{53.0}$ & $+2.8$ \\
    \bottomrule
  \end{tabular*}
\end{table}

\section{Composite scientific figure captioning: Prompts and conversation format}
\label{app:mmsci-prompts}
The experiment design is visualized in Figure~\ref{fig:appendix}.

\begin{figure*}[t]
  \centering
  \includegraphics[width=\textwidth]{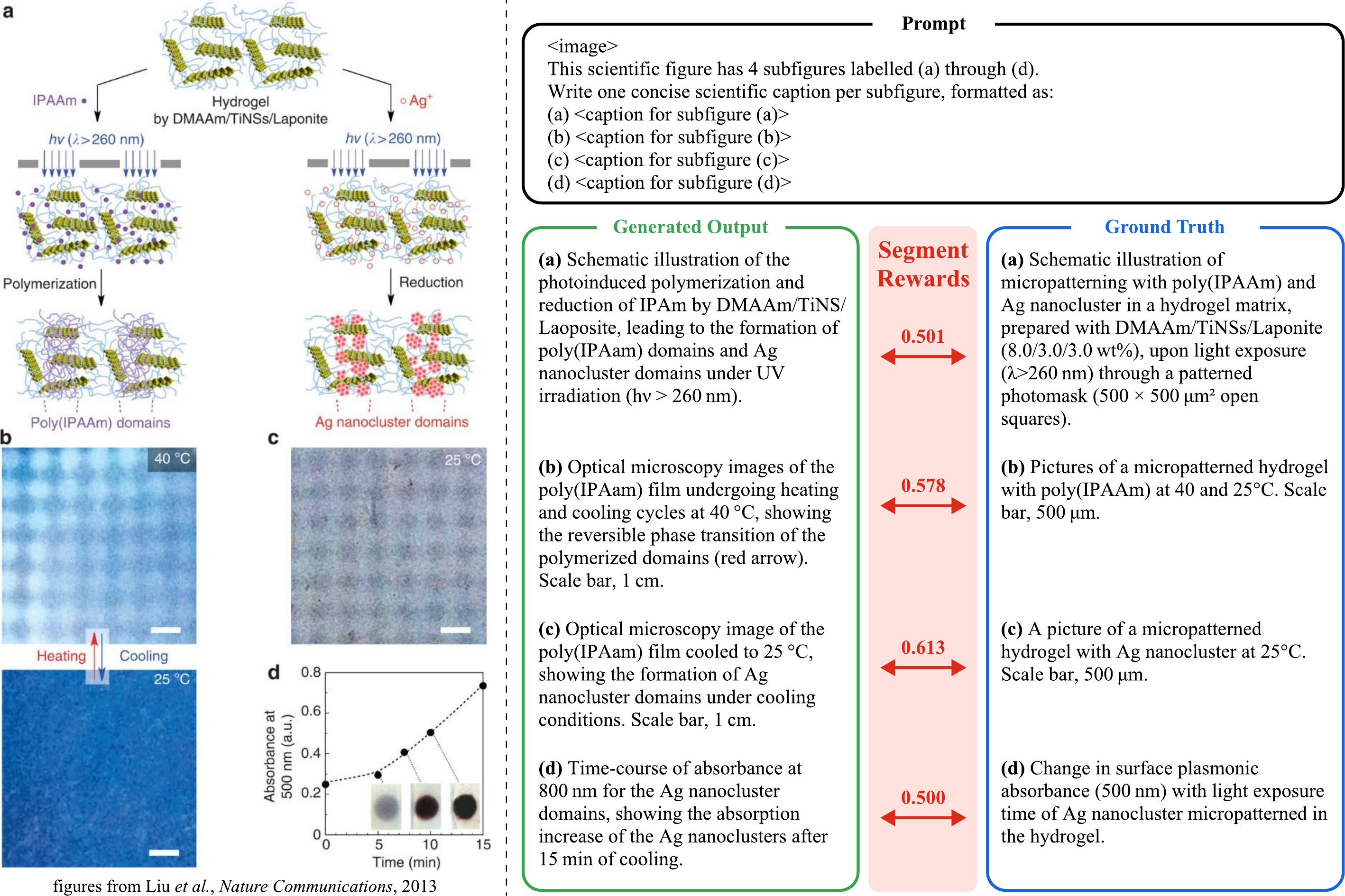}
    \caption{Visualization of the composite scientific figure captioning experiment. Segment rewards are computed by comparing generated output and ground truth by subcaptions.}
  \label{fig:appendix}
\end{figure*}

\paragraph{System prompt.}
We use Qwen3-VL's default system message (\texttt{"You are a helpful assistant."}) for all conditions; no task-specific system prompt is set.

\paragraph{User turn.}
The user turn carries one image attachment (the composite scientific figure) followed by an instruction enumerating the expected subcaption labels. The instruction is templated per-figure based on $S$ and the canonical label sequence $(a),(b),\ldots$:

\begin{tcolorbox}[breakable, title={User prompt---nothink (main results)},
                  fonttitle=\bfseries\small,
                  colback=gray!5, colframe=black,
                  colbacktitle=black, coltitle=white]
\small\ttfamily\noindent
[image]\\
This scientific figure has $S$ subfigures labeled $(a)$, $(b)$, \ldots, and $(\cdot)$.\\
Write one concise scientific caption per subfigure, formatted as:\\
$(a)$ <caption for subfigure $(a)$>\\
$(b)$ <caption for subfigure $(b)$>\\
\ldots
\end{tcolorbox}

\noindent In \textbf{think} mode (used only for the appendix experiments; see Appendix~\ref{app:think-results}) the prompt is replaced with a more structured variant that asks for a per-section reasoning block before each caption, with two illustrative examples in-line:

\begin{tcolorbox}[breakable, title={User prompt---think mode},
                  fonttitle=\bfseries\small,
                  colback=gray!5, colframe=black,
                  colbacktitle=black, coltitle=white]
\small\ttfamily\noindent
[image]\\
This scientific figure has $S$ subfigures labeled $(a)$, $(b)$, \ldots, and $(\cdot)$.\\
For each subfigure, output a <think>...</think> block followed by one caption line prefixed with the subfigure label. The think block is your private analysis; the caption line is the polished description that gets evaluated. They should serve different roles---do NOT restate the same content in both. Keep the caption to one concise sentence. Output the segments back-to-back, with no blank lines between them, in alphabetical order.\\
\\
Example output (for a 3-subfigure figure):\\
<think>Fluorescence signal grows from 0 to 24 h---the marker is being expressed in a time-dependent manner.</think>\\
$(a)$ Confocal images of the reporter at 0, 6, and 24 h, showing progressive accumulation.\\
<think>Bar heights track the imaging trend and the last two error bars don't overlap---the increase is real, not noise.</think>\\
$(b)$ Mean fluorescence intensity per cell across time points, with a significant rise by 24 h.\\
\ldots
\end{tcolorbox}

\paragraph{Single-turn decoding.}
All $S$ subcaptions are emitted in a single autoregressive pass within one assistant turn. The trainer does not inject any user follow-ups between subcaptions; the model is responsible for emitting the canonical $(a),(b),\ldots$ markers itself in sequence, separated by line breaks. Per-section spans are recovered \emph{post hoc} by parsing line-start $(letter)$ markers in the decoded response and mapping them to token positions via the tokenizer's offset map (Section~\ref{sec:method}). Per-section rewards are computed on the caption text only; in think mode any \texttt{<think>...</think>} block is included in the preceding section's token span (so it is loss-shaped by that section's advantage) but stripped before being passed to the SciBERT-F1 / BARTScore scorers.

\paragraph{Format gate.}
A response is \emph{format-valid} iff the line-start $(letter)$ sequence exactly matches the expected $(a),(b),\ldots,(z_S)$, with no duplicates, no extras, and no out-of-order markers. Invalid responses have their reward floored to $-1$ before advantage computation; this prevents reward hacking the per-section signal under SD-GRPO (a model that emits captions only in think mode would otherwise have format-invalid output but pristine per-section SciBERT-F1, leaving SD-GRPO no signal to honor the strict format).

\paragraph{Resulting response structure.}
Under direct mode (\texttt{max\_response\_tokens}=$1{,}024$), the response is a single contiguous block of $S$ caption lines:

\begin{tcolorbox}[breakable, title={Response---direct mode},
                  fonttitle=\bfseries\small,
                  colback=gray!5, colframe=black,
                  colbacktitle=black, coltitle=white]
\small\noindent
\textbf{User:} [composite figure image] + <prompt with $S$ labels>\\
\textbf{Assistant:}\\
\hspace*{2em}$(a)$ <model-generated caption for $(a)$>\\
\hspace*{2em}$(b)$ <model-generated caption for $(b)$>\\
\hspace*{2em}\ldots\\
\hspace*{2em}$(z_S)$ <model-generated caption for $(z_S)$>
\end{tcolorbox}

\noindent and under think mode (\texttt{max\_response\_tokens}=$2{,}048$), each caption line is preceded by a \texttt{<think>}\ldots\texttt{</think>} reasoning block:

\begin{tcolorbox}[breakable, title={Response---think mode},
                  fonttitle=\bfseries\small,
                  colback=gray!5, colframe=black,
                  colbacktitle=black, coltitle=white]
\small\noindent
\textbf{User:} [composite figure image] + <prompt with $S$ labels>\\
\textbf{Assistant:}\\
\hspace*{2em}<think> <reasoning for $(a)$> </think>\\
\hspace*{2em}$(a)$ <model-generated caption for $(a)$>\\
\hspace*{2em}<think> <reasoning for $(b)$> </think>\\
\hspace*{2em}$(b)$ <model-generated caption for $(b)$>\\
\hspace*{2em}\ldots
\end{tcolorbox}

\section{Composite scientific figure captioning: Detailed results}
\label{app:mmsci-results}

\subsection{Per-seed standard deviations for the main reward table}
\label{app:mmsci-train-std}

Table~\ref{tab:mmsci-train-std} reports the same training-reward metrics as Table~\ref{tab:mmsci-train} with sample standard deviations across the $3$ seeds. Cross-seed variance is small at every cell ($\le 0.003$ on concat reward and $\le 0.005$ on section-mean reward), confirming the rankings in the main table are stable to seed.

\begin{table}
  \caption{Composite scientific figure captioning, training-reward metrics with per-seed standard deviations (companion to Table~\ref{tab:mmsci-train}). Trained-method values are mean $\pm$ sample std across $3$ seeds; Base is a single zero-shot evaluation. Bold marks the best per row.}
  \label{tab:mmsci-train-std}
  \centering
  \small
  \begin{tabular*}{\linewidth}{@{\extracolsep{\fill}}llcccc@{}}
    \toprule
    Model & Metric & Base & Holistic & SD-GRPO & Blend \\
    \midrule
    Qwen3-VL-4B & Concat reward       & $0.3993$ & $0.4156_{\pm.003}$ & $0.4122_{\pm.001}$ & $\mathbf{0.4196_{\pm.001}}$ \\
                & Section-mean reward & $0.3819$ & $0.3915_{\pm.005}$ & $0.4038_{\pm.002}$ & $\mathbf{0.4041_{\pm.002}}$ \\
    \midrule
    Qwen3-VL-8B & Concat reward       & $0.4010$ & $0.4227_{\pm.002}$ & $0.4179_{\pm.002}$ & $\mathbf{0.4266_{\pm.002}}$ \\
                & Section-mean reward & $0.3835$ & $0.3986_{\pm.002}$ & $\mathbf{0.4114_{\pm.002}}$ & $0.4112_{\pm.002}$ \\
    \bottomrule
  \end{tabular*}
\end{table}

\subsection{Per-axis pairwise margins from the GPT-4o text-only judge}
\label{app:mmsci-pairwise}

Table~\ref{tab:mmsci-pairwise} expands the head-to-head pairwise margins from Table~\ref{tab:mmsci-judge} into per-axis numbers (accuracy, completeness, fluency, overall). Each cell is Blend's win\% minus the opponent's win\% on that axis; ties constitute the remainder of each comparison. Counts aggregate over $3$ seeds $\times$ $400$ figures ($1{,}200$ paired comparisons per cell), with position randomization per call.

\begin{table}
  \caption{Per-axis GPT-4o text-only pairwise margins (Blend's win\% $-$ opponent's win\%), aggregated over $3$ seeds $\times$ $400$ figures.}
  \label{tab:mmsci-pairwise}
  \centering
  \small
  \begin{tabular*}{\linewidth}{@{\extracolsep{\fill}}llcccc@{}}
    \toprule
    Scale & Opponent & Accuracy & Completeness & Fluency & Overall \\
    \midrule
    Qwen3-VL-4B & Holistic GRPO & $+10.2\%$ & $+10.8\%$ & $+7.6\%$  & $+10.5\%$ \\
                & SD-GRPO       & $+10.2\%$ & $+10.0\%$ & $+15.8\%$ & $+11.2\%$ \\
    \midrule
    Qwen3-VL-8B & Holistic GRPO & $+9.2\%$  & $+8.8\%$  & $+5.2\%$  & $+8.5\%$  \\
                & SD-GRPO       & $+16.7\%$ & $+16.7\%$ & $+18.4\%$ & $+17.5\%$ \\
    \bottomrule
  \end{tabular*}
\end{table}

\paragraph{Judge prompts.} Both judge views use \texttt{gpt-4o-2024-08-06} via the OpenAI Chat Completions API at \texttt{temperature}=$0$, with \texttt{response\_format}=\texttt{json\_object} and \texttt{max\_tokens}=$80$. The candidate and reference subcaption sets are passed as plain text only---the figure image is not provided.

\begin{tcolorbox}[breakable, title={System prompt---single-rater absolute},
                  fonttitle=\bfseries\small,
                  colback=gray!5, colframe=black,
                  colbacktitle=black, coltitle=white]
\small\ttfamily\noindent
You are an expert at evaluating scientific figure captions. Compare a candidate subcaption set against a human-written reference for the same multi-panel scientific figure. Both are formatted as $(a)$ \ldots $(b)$ \ldots $(c)$ \ldots blocks, one line per subfigure. Score the candidate on three axes using the rubric provided.
\end{tcolorbox}

\begin{tcolorbox}[breakable, title={User prompt---single-rater absolute},
                  fonttitle=\bfseries\small,
                  colback=gray!5, colframe=black,
                  colbacktitle=black, coltitle=white]
\small\ttfamily\noindent
This scientific figure has $S$ subfigures, labeled $(a)$, $(b)$, \ldots, and $(\cdot)$. The candidate must include every expected label; missing or extra labels are scoring failures, not formatting noise.\\
\\
Reference subcaptions (human-written, treat as ground truth):\\
<reference subcaption set>\\
\\
Candidate subcaptions (model output, to be scored):\\
<candidate subcaption set>\\
\\
Score the candidate on three axes, using a $1.0$--$5.0$ scale in $0.5$ increments:\\
\\
1. \textbf{Accuracy} -- does each subfigure caption correctly describe what the reference describes (apparatus, conditions, observed effect, statistical claims)? Penalize hallucinated or contradicted claims.\\
2. \textbf{Completeness} -- does the candidate cover the salient information the reference covers for each subfigure?\\
3. \textbf{Fluency} -- is the candidate's language scientific, concise, and grammatical?\\
\\
Anchor points (use $0.5$ steps between):\\
- $5.0$  near-publication-quality; matches the reference faithfully on this axis\\
- $4.0$  substantively correct, minor omissions or stylistic differences\\
- $3.0$  partially correct; salient details missing or slightly off\\
- $2.0$  multiple errors or substantial omissions across subfigures\\
- $1.0$  mostly wrong or incoherent\\
\\
Return ONLY a JSON object with these exact keys (floats $1.0$--$5.0$):\\
\{"accuracy": <float>, "completeness": <float>, "fluency": <float>\}
\end{tcolorbox}

\begin{tcolorbox}[breakable, title={System prompt---head-to-head pairwise},
                  fonttitle=\bfseries\small,
                  colback=gray!5, colframe=black,
                  colbacktitle=black, coltitle=white]
\small\ttfamily\noindent
You are an expert at evaluating scientific figure captions. You will see a human-written reference for a multi-panel scientific figure plus two candidate caption sets (Caption A and Caption B). Decide which candidate is better on three axes, allowing ties. Be impartial; the candidates' labels A/B are arbitrary.
\end{tcolorbox}

\begin{tcolorbox}[breakable, title={User prompt---head-to-head pairwise},
                  fonttitle=\bfseries\small,
                  colback=gray!5, colframe=black,
                  colbacktitle=black, coltitle=white]
\small\ttfamily\noindent
This scientific figure has $S$ subfigures, labeled $(a)$, $(b)$, \ldots, and $(\cdot)$. Each candidate must include every expected label; missing or extra labels are scoring failures, not formatting noise.\\
\\
Reference subcaptions (human-written, treat as ground truth):\\
<reference subcaption set>\\
\\
--- Caption A ---\\
<candidate A subcaption set>\\
\\
--- Caption B ---\\
<candidate B subcaption set>\\
\\
For each axis below, decide whether A is better, B is better, or they are tied:\\
\\
1. \textbf{Accuracy} -- which candidate more correctly describes each subfigure as the reference does (apparatus, conditions, observed effect, statistical claims)? Penalize hallucinated or contradicted claims.\\
2. \textbf{Completeness} -- which candidate covers more of the salient information present in the reference for each subfigure?\\
3. \textbf{Fluency} -- which candidate's language is more scientific, concise, and grammatical?\\
\\
Also give an overall preference: which candidate, all axes considered, would you prefer as the published caption?\\
\\
Return ONLY a JSON object with these exact keys (each value one of "A", "B", "tie"):\\
\{"accuracy": "A|B|tie", "completeness": "A|B|tie", "fluency": "A|B|tie", "overall": "A|B|tie"\}\\
\\
The order of A and B is randomised per call (we de-randomise post-hoc when tabulating margins).
\end{tcolorbox}

\subsection{Blend mixing-weight $\alpha$ ablation}
\label{app:mmsci-alpha}

We sweep the Blend mixing weight $\alpha$ that controls the SD-GRPO/Holistic balance in the advantage. All other hyperparameters match the main setup; we run 1 seed at each of $\alpha \in \{0.25, 0.5, 0.75\}$ and report mean values at final checkpoint in Table~\ref{tab:mmsci-alpha}.

\begin{table}
  \caption{Blend mixing-weight $\alpha$ sweep on Qwen3-VL-4B (values for $1$ seed). $\alpha$ is the SD-GRPO weight in the advantage blend. Bold marks the best per column.}
  \label{tab:mmsci-alpha}
  \centering
  \small
  \begin{tabular*}{\linewidth}{@{\extracolsep{\fill}}lccc@{}}
    \toprule
    Mixing weight & Concat reward & Section-mean reward & SciBERT-F1 (concat) \\
    \midrule
    $\alpha = 0.25$ (more Holistic) & $0.4183$ & $0.4008$ & $0.6331$ \\
    $\alpha = 0.50$ (default)       & $\mathbf{0.4185}$ & $0.4041$ & $\mathbf{0.6356}$ \\
    $\alpha = 0.75$ (more SD-GRPO)  & $\mathbf{0.4185}$ & $\mathbf{0.4082}$ & $0.6329$ \\
    \bottomrule
  \end{tabular*}
\end{table}

\subsection{Mean-GRPO ablation: averaging vs.\ z-norm}
\label{app:mmsci-holistic-avg}
Table~\ref{tab:mmsci-holistic-avg} gives the results for \emph{mean-GRPO}: it lands at section-mean reward $0.392$, well below SD-GRPO and Blend. The ablation rules out the simpler hypothesis that Blend's per-section gains come from \emph{averaging} per-section signals into a richer scalar; what is load-bearing is the per-section \emph{variance-normalization} that SD-GRPO contributes, since broadcasting an averaged scalar to all response tokens delivers no per-section gradient direction.

\begin{table}
  \caption{Mean-GRPO ablation on Qwen3-VL-4B-Instruct (mean $\pm$ std over $3$ seeds). \emph{Mean-GRPO} replaces the concat-level scalar with the per-section mean while keeping the broadcast advantage estimator all other hyperparameters match the main setup.}
  \label{tab:mmsci-holistic-avg}
  \centering
  \small
  \begin{tabular*}{\linewidth}{@{\extracolsep{\fill}}lccc@{}}
    \toprule
    Method & Concat reward & Section-mean reward & SciBERT-F1 (concat) \\
    \midrule
    Holistic GRPO       & $0.4156_{\pm.003}$ & $0.3916_{\pm.005}$ & $0.6305_{\pm.002}$ \\
    Mean-GRPO        & $0.4031_{\pm.002}$ & $0.3923_{\pm.005}$ & $0.6165_{\pm.003}$ \\
    SD-GRPO             & $0.4122_{\pm.001}$ & $0.4038_{\pm.002}$ & $0.6254_{\pm.001}$ \\
    Blend               & $\mathbf{0.4196_{\pm.001}}$ & $\mathbf{0.4041_{\pm.002}}$ & $\mathbf{0.6356_{\pm.001}}$ \\
    \bottomrule
  \end{tabular*}
\end{table}

\subsection{Cross-metric checks}
\label{app:mmsci-cross-metrics}

Table~\ref{tab:mmsci-cross-metrics} reports surface and semantic metrics---ROUGE-L, BERTScore-F1 (RoBERTa-large, layer 17), and CIDEr---under two aggregation views: \emph{concat} (one (pred, ref) pair per figure; $n{=}400$) and \emph{section-mean} (one pair per labeled section; $n{=}1{,}418$). Numbers are means across $3$ seeds at the 4B scale. Blend wins or ties on every cell, with the largest gaps appearing on the per-subfigure view---consistent with the per-section gradient direction that SD-GRPO and Blend share but Holistic does not. The mean-GRPO ablation tracks plain Holistic on ROUGE-L and CIDEr and is slightly lower on BERTScore-F1, again confirming that per-section averaging alone does not match the per-section z-norm contribution.

\begin{table}[!t]
  \caption{Cross-metric checks for the composite scientific figure captioning experiments (mean over $3$ seeds).}
  \label{tab:mmsci-cross-metrics}
  \centering
  \small
  \begin{tabular*}{\linewidth}{@{\extracolsep{\fill}}lcccccc@{}}
    \toprule
    & \multicolumn{3}{c}{Concat} & \multicolumn{3}{c}{Section-mean} \\
    \cmidrule(lr){2-4}\cmidrule(lr){5-7}
    Method & ROUGE-L & BERT-F1 & CIDEr & ROUGE-L & BERT-F1 & CIDEr \\
    \midrule
    Holistic GRPO       & $21.0$ & $83.9$ & $\mathbf{4.8}$ & $19.0$ & $84.5$ & $8.6$ \\
    Mean-GRPO & $21.0$ & $83.5$ & $3.7$          & $19.2$ & $84.6$ & $8.0$ \\
    SD-GRPO             & $21.7$ & $83.8$ & $3.8$          & $19.9$ & $\mathbf{85.0}$ & $8.3$ \\
    \textbf{Blend}      & $\mathbf{22.0}$ & $\mathbf{84.0}$ & $4.7$ & $\mathbf{20.2}$ & $\mathbf{85.0}$ & $\mathbf{9.4}$ \\
    \bottomrule
  \end{tabular*}
\end{table}

\section{Additional reasoning mode results}
\label{app:think-results}

\subsection{Multi-panel dense image captioning: Reasoning models}
\label{app:docci-think-4b}

We additionally trained Qwen3-VL-4B-Instruct on the multi-panel dense image captioning task of Section~\ref{sec:docci-exp} under per-segment reasoning-mode prompting (a brief \texttt{<think>}\ldots\texttt{</think>} step before each panel description). Despite the lower performance compared to direct mode, the SD-GRPO advantage from the main experiment carries over: SD-GRPO outperforms Holistic GRPO (Table~\ref{tab:docci-think-4b}).

\begin{table}[!t]
  \caption{Multi-panel dense image captioning under per-segment reasoning-mode prompting using Qwen3-VL-4B-Instruct. Values are BERTScore-F1 (per-panel mean), mean $\pm$ std over three seeds.}
  \label{tab:docci-think-4b}
  \centering
  \small
  \begin{tabular*}{\linewidth}{@{\extracolsep{\fill}}lccc@{}}
    \toprule
    $S$ & Holistic & SD-GRPO & $\Delta$ \\
    \midrule
    3 & $0.313_{\pm 0.007}$ & $\mathbf{0.342}_{\pm 0.011}$ & $+0.029$ \\
    5 & $0.279_{\pm 0.001}$ & $\mathbf{0.323}_{\pm 0.004}$ & $+0.044$ \\
    7 & $0.285_{\pm 0.010}$ & $\mathbf{0.297}_{\pm 0.008}$ & $+0.012$ \\
    \bottomrule
  \end{tabular*}
\end{table}

\begin{table}[!t]
  \caption{Results of composite scientific figure captioning with per-segment-think prompt using Qwen3-VL-4B-Instruct models. Values are mean over $2$ seeds.}
  \label{tab:mmsci-think-4b}
  \centering
  \small
  \begin{tabular*}{\linewidth}{@{\extracolsep{\fill}}lccc@{}}
    \toprule
    Method & Concat reward & Section-mean reward & SciBERT-F1 (concat) \\
    \midrule
    Holistic GRPO & $0.401$          & $0.381$          & $0.626$ \\
    SD-GRPO       & $0.406$          & $\mathbf{0.402}$ & $0.622$ \\
    Blend         & $\mathbf{0.415}$ & $\mathbf{0.402}$ & $\mathbf{0.633}$ \\
    \bottomrule
  \end{tabular*}
\end{table}

\subsection{Composite scientific figure captioning: Reasoning models}
\label{app:mmsci-think-4b}

We additionally trained Qwen3-VL-4B-Instruct with a per-segment-think prompt that asks the model to emit a \texttt{<think>}\ldots\texttt{</think>} block before each labelled subcaption (Appendix~\ref{app:mmsci-prompts}). The model converges to producing one think block per section (avg \texttt{num\_thinks} ${\approx} 3.5$, matching avg $S$), so the per-section z-norm of SD-GRPO has well-defined per-segment token spans to attribute reward to. Despite the format working, every method scores lower on every metric than its nothink counterpart in Table~\ref{tab:mmsci-train}---consistent with our broader finding that the model's reasoning content does not translate into higher caption quality on this task. The Blend $>$ Holistic, SD-GRPO ranking from the main table is preserved (Table~\ref{tab:mmsci-think-4b}).


\end{document}